\let\citet\cite
\definecolor{lightergray}{rgb}{0.9, 0.9, 0.9}
\definecolor{evenlightergray}{rgb}{0.95, 0.95, 0.95}
\definecolor{Darkblue}{rgb}{0,0,0.4}
\definecolor{Brown}{cmyk}{0,0.81,1.,0.60}
\definecolor{Purple}{cmyk}{0.45,0.86,0,0}
\newtheorem{theorem}{Theorem}[section]
\newtheorem{example}[theorem]{Example}
\newtheorem{definition}[theorem]{Definition}
\newtheorem{fact}[theorem]{Fact}
\newtheorem{claim}[theorem]{Claim}
\renewcommand{\epsilon}{\varepsilon}
\renewcommand{\tilde}{\widetilde}
\DeclareMathOperator*{\Exp}{\mathbb{E}}
\newcommand{\EEs}[2]{\Exp_{#1}\left[#2\right]}
\newcommand{\reals}{\mathbb{R}}
\newcommand{\assignequals}{\coloneqq}
\newcommand{\1}{\mathds{1}}
\newcommand{\cA}{\mathcal{A}}
\newcommand{\cD}{\mathcal{D}}
\newcommand{\cQ}{\mathcal{Q}}
\newcommand{\cR}{\mathcal{R}}
\newcommand{\cS}{\mathcal{S}}
\newcommand{\cZ}{\mathcal{Z}}
\Crefname{fact}{Fact}{Facts}
\title{\textbf{The Limits of Preference Data for Post-Training}
}
\author[1,2]{Eric Zhao}
\author[2]{Jessica Dai}
\author[1]{Pranjal Awasthi}
\affil[1]{Google Research}
\affil[2]{University of California, Berkeley}
\date{}
\begin{document}

\allowdisplaybreaks
\maketitle

\begin{abstract}
Recent progress in strengthening the capabilities of large language models has stemmed from applying reinforcement learning to domains with automatically verifiable outcomes.
A key question is whether we can similarly use RL to optimize for outcomes in domains where evaluating outcomes inherently requires human feedback; for example, in tasks like deep research and trip planning, outcome evaluation is qualitative and there are many possible degrees of success.
One attractive and scalable modality for collecting human feedback is \textit{preference data}: ordinal rankings (pairwise or $k$-wise) that indicate, for $k$ given outcomes, which one is preferred.
In this work, we study a critical roadblock: preference data fundamentally and significantly limits outcome-based optimization.
Even with idealized preference data (infinite, noiseless, and online), the use of ordinal feedback can prevent obtaining even approximately optimal solutions.
We formalize this impossibility using voting theory, drawing an analogy between how a model chooses to answer a query with how voters choose a candidate to elect.
This indicates that grounded human scoring and algorithmic innovations are necessary for extending the success of RL post-training to domains demanding human feedback.
We also explore why these limitations have disproportionately impacted RLHF when it comes to eliciting reasoning behaviors (e.g., backtracking) versus situations where RLHF has been historically successful (e.g., instruction-tuning and safety training), finding that the limitations of preference data primarily suppress RLHF's ability to elicit robust strategies---a class that encompasses most reasoning behaviors.
\end{abstract}

\section{Introduction}
Preference data,  which takes the form of ordinal comparisons (e.g., answer $a_1$ is preferred to $a_2$ for query $q$), has been established as a highly effective and practical way of incorporating human feedback at scale \cite{DBLP:conf/nips/ChristianoLBMLA17}. Reinforcement Learning from Human Feedback (RLHF) \cite{DBLP:conf/nips/Ouyang0JAWMZASR22} and its variants \cite{rafailov2024directpreferenceoptimizationlanguage} have become the dominant paradigm for the instruction tuning, alignment, and safety training of large language models (LLMs) \cite{DBLP:conf/iclr/WeiBZGYLDDL22, DBLP:conf/nips/Ouyang0JAWMZASR22,bai2022constitutional}.
Preference data offers advantages in cost-effectiveness, scalability, and its ability to reward subtle qualitative improvements.

In contrast, the greatest advancements in reasoning, agentic behavior, and tool-use capabilities have been driven by reinforcement learning on verifiable rewards (``RLVR'') in domains where automated ground-truth verifiers are readily available \cite{deepseekai2025deepseekr1incentivizingreasoningcapability,o1-preview}, such as in coding, formal theorem proving, and close-ended exams \cite{DBLP:conf/nips/HendrycksBKABTS21,aime2024,white2024livebenchchallengingcontaminationfreellm,openai2025competitiveprogramminglargereasoning,ren2025deepseekproverv2advancingformalmathematical}.
However, many important applications, such as agentic research (e.g., Deep Research \cite{openai-deep-research-2025,google-gemini-deep-research-2024}) and creative writing \cite{guardian-chatgpt-creative-writing-2025, chakrabarty2025ai, ahuja2025finding}, lack binary or quantitative notions of success and require human feedback for evaluation.
A pressing challenge, therefore, is to determine how we can extend outcome-based optimization (e.g., via RL) to obtain capabilities in domains where human feedback is necessary.
Has RLHF fallen short of RLVR simply due to cost and the limitations of relying on preference data that is collected in an offline fashion?
Would sufficiently scaling up the collection of preference data allow for similar RL-driven breakthroughs?
\begin{figure}[htbp]
    \centering
    \begin{subfigure}[c]{0.49\linewidth}
        \centering
        \includegraphics[width=\linewidth]{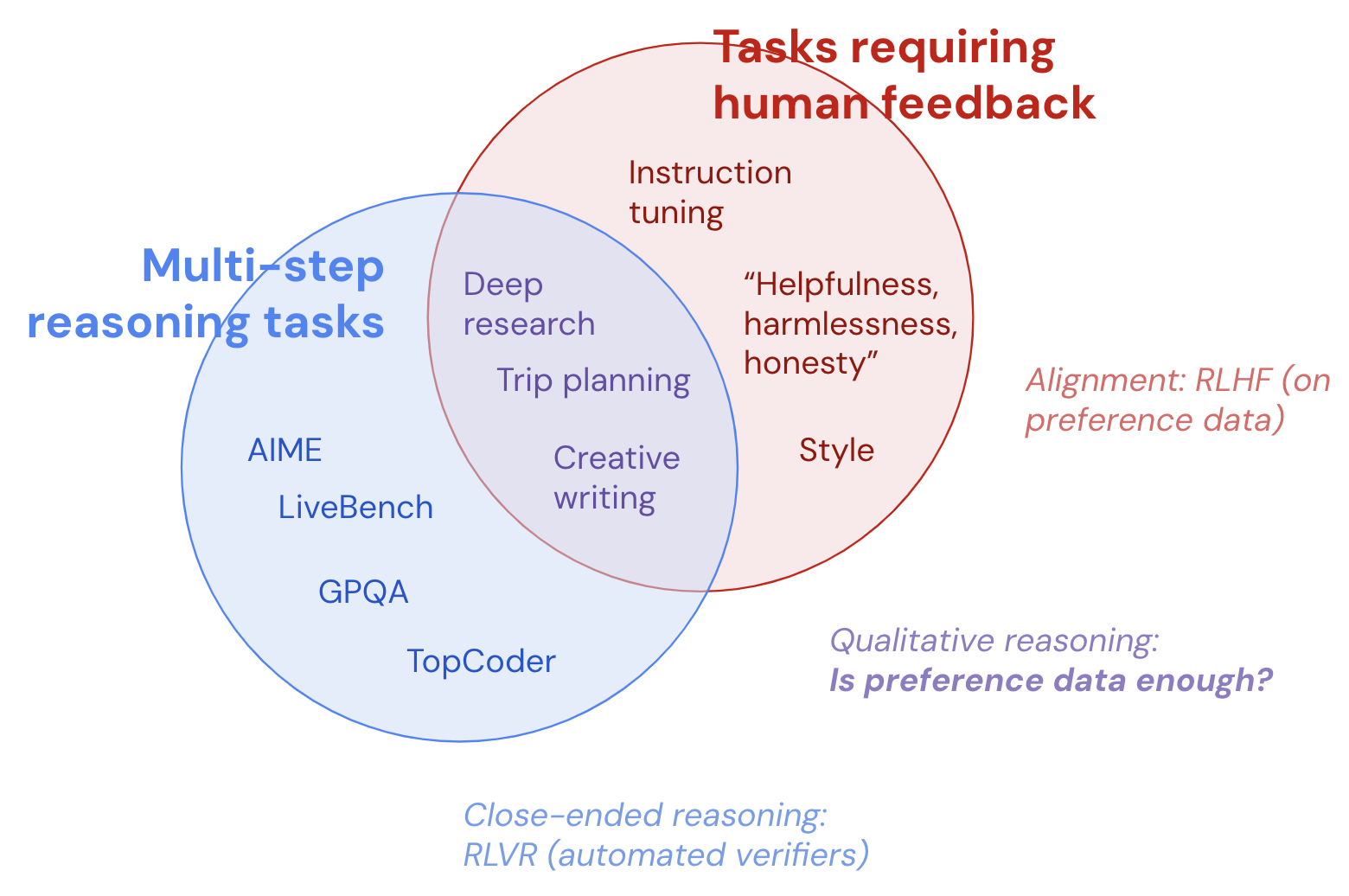}
        \label{fig:DiagramLeft}
    \end{subfigure}
    \hfill
    \begin{subfigure}[c]{0.5\linewidth}
        \centering
        \includegraphics[width=\linewidth]{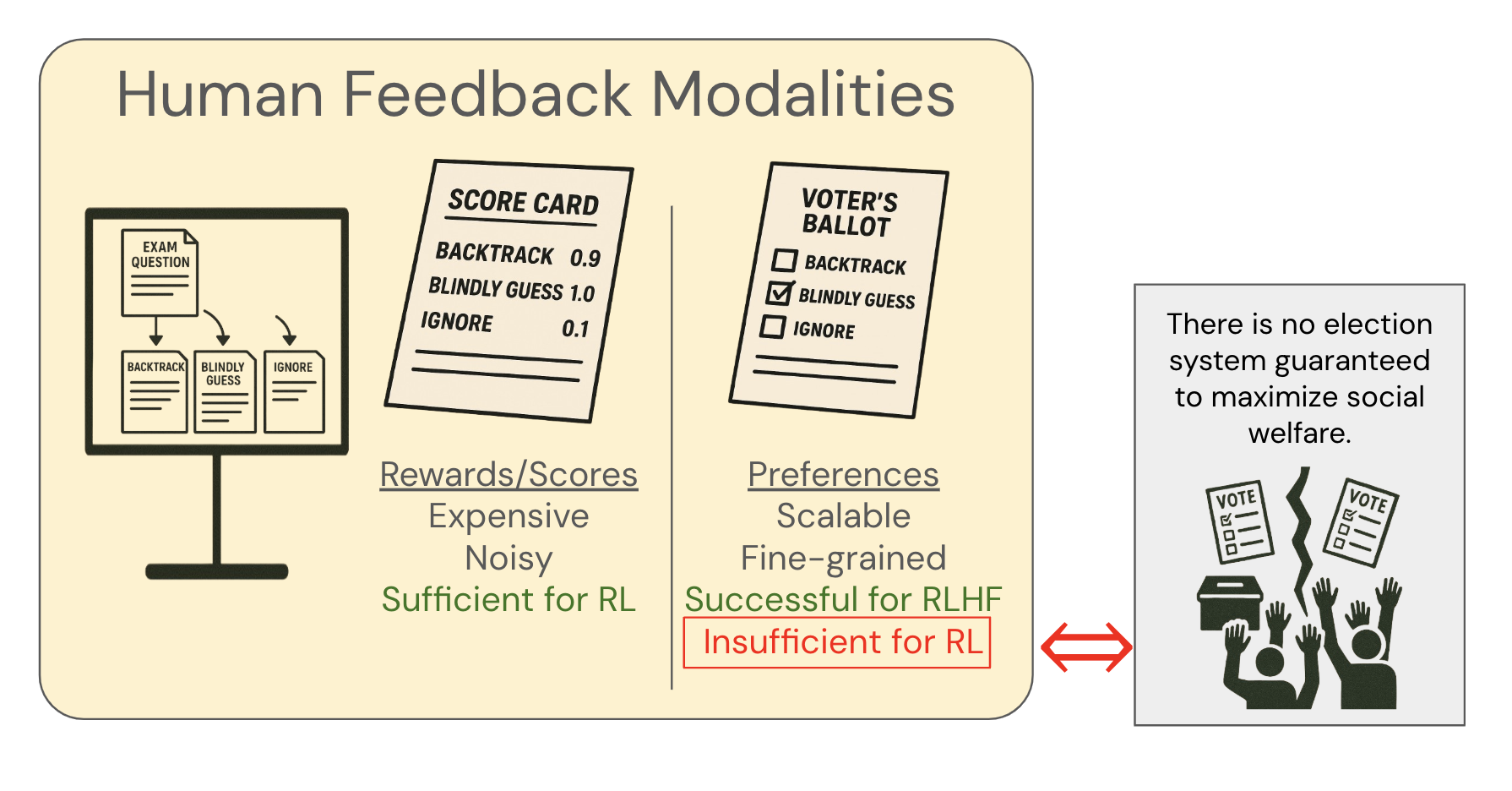}
        \label{fig:DiagramRight}
    \end{subfigure}
    \caption{\small Overview of our motivation and results. \textit{Left:} RLHF and RLVR have demonstrated empirical success on alignment and close-ended reasoning tasks, respectively. Our investigation of preference data is motivated by reasoning tasks that require human feedback. \textit{Right:} Comparison of scalar rewards vs. ordinal preferences as data modalities. Our impossibility result is due to a connection between the post-training of models using preference data and the analysis of electoral systems in social choice theory.}
    \label{fig:Diagram}
\end{figure}

We show that there is a fundamental challenge with relying on preference data for reinforcement learning, even when preference data is collected online and at scale.
We formalize this as an impossibility result (\Cref{theorem:bounded_computation_formal}): {given any pretrained model,} post-training purely on preference data can produce a significantly suboptimal post-trained model---even in idealized conditions where there is an endless volume of preference data available, the preference data is noiseless, unlimited computational power is allotted for optimization, and the preference data is collected in an online fashion.
We prove this result for a very general model of LLM post-training that formalizes post-training as learning how to route queries to a set of downstream circuits that were learned during pretraining; this model is informed by a growing body of empirical work \cite{wang2025reinforcementlearningreasoninglarge,yue2025doesreinforcementlearningreally} finding that post-training methods (whether RLHF, SFT, or RLVR) primarily reinforce existing circuits/capabilities rather than introducing wholly new ones.

This result also provides a complementary perspective on the limitations of RLHF, especially for reasoning. The poor performance of RLHF is typically attributed to the fact that preference data is collected offline, meaning that overfitting is a significant concern and it is necessary to regularize the post-trained model to not deviate significantly from its pre-trained base
\cite{ DBLP:conf/nips/Ouyang0JAWMZASR22,zhu2024ids}.
In contrast, our result demonstrates that RLHF has an additional, orthogonal, limitation: even if data \textit{collection} improves (e.g., online and at scale), the reliance on ordinal preference data as a \textit{format} means that RLHF can still fail to improve the model, and potentially even make the model significantly worse.
For reasoning in particular, as we show in Section \ref{sec:expts}, RLHF will tend to penalize and suppress noisy and robust chains-of-thought.
While this may not impact the effectiveness of RLHF for instruction-tuning and safety training, noisy and robust reasoning processes---such as the heavy use of backtracking---are key to improved reasoning capabilities and multi-step problem solving.
Our result underscores that there is no straightforward method to circumvent this issue unless one has precise prior knowledge of the specific desirable behaviors one hopes to elicit in the post-trained model and can tailor the feedback process accordingly---such as instructing labelers to explicitly avoid penalizing backtracking even when the backtracking is unnecessary.

From a technical perspective, our approach to proving Theorem \ref{theorem:bounded_computation_formal} is inspired by a novel analogy between the post-training of models and electoral processes. A rich body of social choice theory research has studied \textit{distortion}, which quantifies the extent to which electoral processes can be suboptimal with respect to social welfare \cite{procaccia2006distortion}.
Theorem \ref{theorem:bounded_computation_formal} can be seen as a novel distortion bound that applies to social choice problems that require a mechanism not only to select alternatives but also to determine how to effectively group voters into electoral districts. 

\subsection{Related works}
\label{subsec:related}
Our work seeks to bridge the empirical progress made in the LLM space and the theoretical insights about the limitations of ordinal data from social choice theory. We thus place our work in context with these communities as follows. 

\paragraph{Understanding RLHF.}
A wide range of recent work studies the extent to which RLHF and its variants can successfully learn from preference data. 
Commonly-observed limitations include failures to learn preference rankings (e.g., \cite{chen2024preference}), decreases in the log probability of preferred responses, (e.g., \cite{pal2024smaug, razin2024unintentional, zhu2024ids}), and general overoptimization (e.g., \cite{moskovitz2023confronting, liu2024provably}). 
These issues are often attributed to the distinction between on/off-policy algorithms or on/offline data collection.
In contrast, we study the preference data modality itself.

\citet{wang2023rlhf} was an early work that also studied preference data as a modality, demonstrating that there exists a pretrained model for which preference data is not sufficient to identify the optimal post-trained model.
Our results imply a strengthening of \citet{wang2023rlhf}'s non-identifiability result; in particular, we show that for \emph{any} pretrained model, preference data is not only insufficient for identifying the optimal post-trained model but cannot even identify a model whose utility is a constant fraction of being optimal.
\citet{wang2023rlhf} also suggests that preference data may be sufficient under specific preference noise assumptions, though we show that impossibility remains (\Cref{theorem:bounded_computation_formal_noise}) even in the case where preference noise follows a Bradley-Terry model with linear scores.

\paragraph{Reasoning capabilities.} Conventional wisdom is that methods for learning from preference data (e.g. DPO and RLHF) on outcomes alone typically fail for reasoning capabilities because such preference data over outcomes is too noisy---models could give mostly-correct reasoning traces but an incorrect answer, or arrive at a correct answer through faulty reasoning (e.g., as discussed in \cite{lai2024step}). Thus, much algorithmic work has focused on optimization over process rather than outcomes (e.g., \cite{lightman2023let, uesato_solving_2022, lai2024step}; see \citet{xu2025towards} for a survey). Notably, many of these methods still require access to a ground-truth verifier (e.g., \cite{xie2024monte, zhang2024chain, yuan2024advancing, wang2023making}); in fact, state-of-the-art reasoning models were developed in domains with such verifiers \cite{shao2024deepseekmath, deepseekai2025deepseekr1incentivizingreasoningcapability}.
This motivates our goal of understanding whether successful RL might be possible in ``reasoning'' domains that require human feedback, such as deep research, trip planning, or creative writing \cite{guardian-chatgpt-creative-writing-2025, chakrabarty2025ai, ahuja2025finding}. 

Finally, we note that connections between reasoning and robustness have been previously studied (e.g., \cite{yu2025benchmarking, zaremba2025trading}); our goal in this work is to use this connection to provide a more concrete answer to why vanilla RLHF fails for reasoning capabilities more than other applications for which RLHF is more successful, such as instruction tuning.

\paragraph{Social choice theory and distortion.}
Social choice has developed decades of work on \textit{distortion}, which measures the degree to which the outcome of an election can be suboptimal with respect to societal welfare \cite{procaccia2006distortion, anshelevich2021distortion}. The connection between social choice and learning from preference data for post-training has been widely discussed \cite{shirali2025directalignmentheterogeneouspreferences,dai2024mappingsocialchoicetheory, conitzer2024social, siththaranjan_distributional_2023}. 

Recently, \citet{yang-private-comm} showed that, when preference learning is performed in heterogeneous settings where labelers disagree, the need to reconcile disagreements between labelers introduces lower bounds on the distortion of any preference learning algorithm.
To the best of our knowledge, \citet{yang-private-comm} is the first to explicitly quantify the suboptimality of policies learned via RLHF in terms of distortion (in contrast to, e.g., the earlier non-identifiability results of \cite{wang2023rlhf}).
In contrast to \citet{yang-private-comm}, we are interested in showing unavoidable lower bounds for preference learning that hold generally, even in the absence of heterogeneity, and are thus a property of post-training rather than of resolving labeler disagreement.
To this end, we similarly build on the concept of distortion but apply it in a novel fashion, drawing an analogy between social choice and post-training. We discuss this more concretely at the end of \cref{sec:thy}.

\section{A formal perspective on post-training}
\label{sec:model}
Our starting point is to formalize a reasonable and general model of language model post-training.
The exact mechanisms behind the improvements seen from RLHF, RLVR and SFT is a topic of lively debate, but there is a growing consensus that these post-training methods reinforce existing capabilities attained in pretraining rather than produce new capabilities.
The current evidence includes that (1) distillation on reasoning models remains effective even when the example chains-of-thought are noisy and wrong \cite{muennighoff2025s1simpletesttimescaling,bespoke_stratos}; (2) reinforcement learning on a single sample works well \cite{wang2025reinforcementlearningreasoninglarge}; and (3) reasoning behaviors are already present in pretrained base models \cite{yue2025doesreinforcementlearningreally}.
Due to the diversity and scale of pretraining data and thus the capabilities represented in pretrained models, this is not an indictment of the power of post-training.
However, it does inform us that we can abstractly think of pretrained models as consisting of a set of capabilities, and post-training as updating how these capabilities are applied to queries.
In a more formal language, post-training can be stylized as learning a good way to route queries to appropriate downstream circuits (capabilities), where the circuits have already been learned in pretraining.

\paragraph{Formal model.}
We now turn to mathematically formulating our model of post-training, which we adopt to study the limitations of preference data.
Let us represent the universe of possible queries as $\cQ$ and the universe of possible responses as $\cR$.
We will use $\cD$ to denote a uniform distribution over queries $\cQ$.

We stylize a large language model as consisting of two parts.
The first part of the model is a set of circuits $\cS = \{s_1,\dots, s_m\}$ learned during pretraining, where each circuit $s_i : \cQ \to \cR$ helps compute potential responses to queries $q \in \cQ$.
The second part of the model is assigning which circuits should be used to answer which queries.
We formalize this as a mapping $\phi: \cQ \to \cZ$ of queries $q \in \cQ$ to internal representations $z \in \cZ$, and a mapping $g: \cZ \to \Delta_\cS$ of internal representations to circuits.
Here, $\Delta_\cS$ denotes the set of probability distributions over circuits $\cS$, and reflects the fact that we'll allow for the possibility that models randomly map queries to circuits.
We can therefore express a language model by the tuple $M = (\phi, g, \cS)$, and write its response to a query $q$ as $M(q) \coloneqq (g \circ \phi)(q)(q)$, where $(g \circ \phi)(q) \in \cS$ is the circuit we'll use to answer query $q$.

We stylize model post-training as improving how a pretrained language model $M_0 = (\phi_0, g_0, \cS_0)$ assigns queries to circuits, so as to maximize some utility function $u: \cQ \times \cR \to \reals$, where $u(q, r)$ denotes the utility\footnote{Utility can be thought of as a real-valued composite measure that quantifies, for example, correctness, readability, style, and other desiderata. This may not always be measurable from data, but assuming its existence is necessary to have any discussion about ``optimality.''} of responding to query $q$ with response $r$.
A post-training algorithm should return a language model $M = (\phi, g, \cS_0)$ with an improved assignment of queries to circuits, $g \circ \phi$, that results in a higher expected utility than its pretrained base $M_0$, i.e. $\EEs{q \sim \cD}{u(q, M(q))} 
 \geq \EEs{q \sim \cD}{u(q, M_0(q))}$.
We will use $\Phi \subset \cZ^\cQ$ to denote the set of possible mappings from queries to internal representations that our model class allows us to learn and that $\phi$ must thus be chosen from.

Finally, we note that even similar queries (e.g., variations of the same math problem) count as distinct elements of $\cQ$ in our model.
We are thus mainly interested in settings where the number of possible user queries $\cQ$ that we expect our language model to encounter is large.
This means that a language model should not always be able to identify the ideal circuit for processing a given query, i.e. $|\cQ| \gg |\cZ|, \log |\Phi|$.
For example, as we see later in \Cref{fig:deepseek_backtracks_pie}, language models do not always know immediately whether to answer a math problem using a circuit that is meticulous and aggressively backtracks or a circuit that tries to be clean and direct.
We also note that because language models do not have a dedicated circuit for answering each individual possible query, we work in the regime where $|\cQ| \gg |\cS|$.

\paragraph{Outcome-based optimization with preference data.}
With access to the utility function $u$ and pretrained model $M_0 = (\phi_0, g_0, \cS_0)$, one can always post-train a model $M = (\phi, g, \cS_0)$ to use the best possible strategy for assigning queries to pretrained circuits $\cS_0$, i.e.
\begin{equation}
\label{eq:optimal}
\EEs{q \sim \cD}{u(q, M(q))}
= \max_{M^* \in \{(\phi^*, g^*, \cS_0) \mid \phi^* \in \Phi,\; g^*: \cZ \to \Delta_\cS\}} \EEs{q \sim \cD}{u(q, M^*(q))}.
\end{equation}
Here, $\{(\phi^*, g^*, \cS_0) \mid \phi^* \in \Phi,\; g^*: \cZ \to \Delta_\cS\}$ is the set of all models that could be obtained from post-training $M_0$.
Standard reinforcement learning, as long as one has (even noisy) observations of the utilities $u$, will yield a post-trained model $M$ that satisfies \Cref{eq:optimal} in the limit of infinite time and samples.

We are interested in whether this still holds when, instead, only preference data is available---that is, labels are provided only in the sense that they compare  the relative utilities of distinct outcomes.
In particular, does there exist \emph{any} algorithm, be it RLHF or another RL-based approach, that is able to successfully post-train, but using ordinal preference data instead of cardinal reward feedback about utilities \(u\)?
If not, how close to an optimal post-trained model can one hope to get by learning from preference data?
Moreover, we want a guarantee that holds not just for specific pathological pretrained models, but for the post-training of \emph{any} pretrained model.
Note also that we will allow for the preference data to be queried online---this is to distinguish between the fundamental limitations of preference data and, for example, the overfitting caused by RLHF's usage of offline-collected preference data.

To formalize the preference learning setting, we consider access to a labeler that, given a query \(q \in \mathcal{Q}\) and examples of responses to $q$ generated by two downstream circuits \(s_i, s_j \in \mathcal{S}\), identifies which response was higher-utility. 
To isolate the impact of preference data as a modality, we allow for \emph{unlimited data}, and assume perfect (noiseless, unbiased) access to utility-based preference labels for every query $q \in \cQ$ and every pair of downstream circuits.
Concretely, we can write these preferences as orderings \(\succ_u \assignequals \{\succ_{q, u}\}_{q \in \cQ}\) where \(s_i \succ_{q, u} s_j\) if and only if \(u(q, s_i(q)) > u(q, s_j(q))\).
Note that we can interpret the preferences $\succ_u$ as either an oracle that returns preference comparisons on-demand or an infinitely large dataset containing all preference comparisons that one may hope to sample.

\section{Limits of preference data}
\label{sec:thy}
To show our main results, we consider an analogy between our model of post-training and the process of running a multi-district election.
In particular, we can analogize the set of circuits $\cS$ in our pretrained model as a set of candidates in an election (also referred to as ``alternatives''), and the set of possible user queries $\cQ$ as voters in the election.
Each query $q \in \cQ$ has a preference over the circuits $\cS$, determined by the utilities $\{u(q, s(q))\}_{s \in \cS}$ of applying each circuit to the query.

\subsection{Warm-up: A failure case for RLHF}
\label{subsec:warm-up}
Prior work studying the relationship between the RLHF algorithm and social choice has shown an equivalence between standard RLHF and a voting rule commonly known as \textit{Borda count} \cite{siththaranjan_distributional_2023}.

\begin{definition}[Borda count]
Let $\mathrm{rank}_{\succ_q}(s)$ denote the rank of $s$ in the total ordering $\succ_q$.
The Borda score of $s$ for query $q$ is given by $B_q(s) = m - \mathrm{rank}_{\succ_q}(s)$.
The Borda count voting rule selects $s_{\mathrm{Borda}} = \arg\max_{s \in \mathcal{S}} \sum_{q \in \mathcal{Q}} B_q(s)$, the alternative maximizing the sum of its Borda scores.
\end{definition}
Imagine we are given a silly pretrained model $\phi$ that is unable to distinguish between queries and must route them all to the same downstream circuit, such that the only flexibility in post-training is choosing this default circuit.
We can easily construct a scenario involving this silly pretrained model where Borda count---and equivalently, RLHF---is suboptimal.
Note that our silly choice of pretrained model reduces us to a  situation that resembles the heterogeneous/hidden-context settings studied in \cite{siththaranjan_distributional_2023,yang-private-comm}.
\begin{example}
\label{ex:borda}
Suppose $\cQ = \{q_1, q_2, q_3\}$, $\cS = \{s_A, s_B, s_C\}$, and $\cZ = \{z\}$. In the language of our model, this corresponds to a world where there are three possible queries, three possible downstream circuits, and a limited representational capacity where the model is unable to distinguish between queries and thus maps them to the same representation $z$. 
Let $\alpha \geq 0$ and $\beta \in \reals$ such that $2\alpha < \beta$.
Assign the following utilities: 
\[
\begin{array}{c|ccc}
& s_A & s_B & s_C \\
\hline
q_1 & 1 & 0 & 1 - \alpha \\
q_2 & 1 & 0 & 1 - \alpha \\
q_3 & 0 & 1 & \beta
\end{array}
\]
The Borda count winner is $s_A$, with $\sum_q B_q(s_A) = 4$, $\sum_q B_q(s_B) = 2$, and $\sum_q B_q(s_C) = 3$.
However, the total utility provided by each strategy is $\sum_q u(q, s_A(q)) = 2$, $\sum_q u(q, s_B(q)) = 1$, and $\sum_q u(q, s_C(q)) = 2-2\alpha + \beta$. Thus, while $s_C$ is the highest-utility strategy, Borda count selects $s_A$.
\end{example}
In Example \ref{ex:borda}, the circuit $s_C$ is never the winner when considering each query independently; thus, Borda count's ranking-based scores result in $s_C$ being dispreferred to $s_A$, even though it achieves the highest utility overall. In the language of social choice, $s_C$ is a ``compromise candidate.''
If it was known ahead of time that there were particular features associated with high-utility outcomes (i.e., those produced by $s_C$), one ad-hoc solution to fix this problem might be to instruct labelers to explicitly reward those features. However, post-training should ideally be able to learn successful strategies even when such strategies are unknown or unidentifiable \textit{a priori}, i.e. optimize for outcomes. 

\subsection{A more general lower bound}
We'll now show that the limitations of learning from preferences is fundamental, rather than the eccentricity of a particular choice of pretrained model or preference learning algorithm (e.g., RLHF as in Example \ref{ex:borda}).
In particular, we will show that for \emph{any} preference learning algorithm and any pretrained model, there are situations in which the algorithm must produce a severely suboptimal post-trained model.
Moreover, we will quantify just how suboptimal the post-trained model can be.
As we formalize in the following theorem, this suboptimality actually grows with the complexity of one's pretrained model.

\begin{restatable}{theorem}{origdistortion}
\label{theorem:bounded_computation_formal}
Consider any pretrained model \(M_0 = (\phi_0, g_0, \cS_0)\) and post-training algorithm $\cA$.
There always exists a post-training objective, i.e. a utility $u: \cQ \times \cR \to \reals$ we wish to maximize, such that: if we post-train $M_0$ on noiseless preference data $\succ_u$, the resulting model $M = \cA(M_0, \succ_u)$ is suboptimal by at least a multiplicative factor compared to the best model $M^*$ that we could have post-trained from $M_0$:
\begin{align}
\label{eq:distortion}
\underbrace{\max_{M^* \in \{(\phi^*, g^*, \cS_0) \mid \phi^* \in \Phi, g^*: \cZ \to \cS_0\}} \frac{\EEs{q \sim \cD}{u(q, M^*(q))}}{\EEs{q \sim \cD}{u(q, {M}(q))}}}_{(\mathrm{Distortion})}
\geq 
\Omega\Big(\sqrt{|\cS_0|}\Big)
\end{align}
when $|\cQ| \gg |\cS|, |\cZ|$.
Moreover, this lower bound holds even if we limit ourselves to situations where utilities are bounded in $[0, 1]$ or, even stronger, where $\sum_{s \in \cS_0} u(q, s(q)) = 1$ for all $q \in \cQ$.
\end{restatable}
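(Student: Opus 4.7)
The plan is to recognize post-training on preference data as a \emph{districted} ordinal voting problem, and then extend the classical $\Omega(\sqrt{m})$ single-election distortion lower bound (Caragiannis--Procaccia style) to the multi-district setting where the algorithm designs the districts.

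\textbf{Voting reduction.} I would identify circuits in $\cS_0$ with candidates, queries in $\cQ$ with voters, $\phi$ with a partition of voters into at most $|\cZ|$ electoral districts, and $g$ with a (possibly randomized) per-district selection of a candidate. Under this identification, the objective $\Exp_{q\sim\cD}[u(q,M(q))]$ is exactly the per-capita social welfare of the induced districted voting rule. Because $\cA$ sees only $\succ_u$, its output $(\phi,g)$ is a function of the ordinal profile alone; in particular it is constant across any two utilities inducing the same profile.

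\textbf{Hard-instance family via Yao.} Fix $\cA$ and $M_0$. I would construct a family $\cU = \{u_s\}_{s\in\cS_0}$ of unit-sum, $[0,1]$-bounded utilities that all induce the same preference profile $\succ^*$, but under which a different circuit $s$ is cardinally optimal. Adapting the Caragiannis--Procaccia construction: for each ``secret winner'' $s$, take a $\Theta(1/\sqrt{|\cS_0|})$ fraction of queries to place unit-sum mass concentrated on the $\sqrt{|\cS_0|}$ ``top'' circuits of each query (which include $s$), and spread mass uniformly on the remainder. One then checks that routing every query to $s$ yields welfare $\Theta(|\cQ|/\sqrt{|\cS_0|})$, while any fixed alternative circuit collects only $\Theta(|\cQ|/|\cS_0|)$. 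The symmetrization step picks the within-top tiebreaking so that $\succ^*$ is invariant under a group action transitively permuting circuits, so all $u_s$ share the profile $\succ^*$.

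\textbf{Neutralizing the districting freedom.} Because $(\phi,g)=\cA(M_0,\succ^*)$ is fixed while $u$ varies over $\cU$, I would average over $s \sim \mathrm{Unif}(\cS_0)$. The symmetry of $\succ^*$ implies the probability that any particular district $z$ is assigned its secretly-optimal circuit is at most $1/|\cS_0|$, irrespective of how $\phi$ groups queries. Summing over districts, the expected welfare against a uniformly random $u_s$ is $O(|\cQ|/|\cS_0|)$, while $\mathrm{OPT}(u_s) = \Omega(|\cQ|/\sqrt{|\cS_0|})$. By Yao's principle, some $u \in \cU$ realizes distortion $\Omega(\sqrt{|\cS_0|})$ against $\cA$, which is the claimed bound, and the unit-sum / $[0,1]$ refinements fall out by construction.

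\textbf{Main obstacle.} The technical crux is Step 2: constructing the common profile $\succ^*$ together with the cardinal realizations $\{u_s\}$ so that $\succ^*$ is invariant under a rich enough circuit-relabeling group (otherwise the algorithm can partially identify the secret winner from preferences and use $\phi$ to shard queries informatively) while each $u_s$ simultaneously attains the $\sqrt{|\cS_0|}$ cardinal gap required by the Caragiannis--Procaccia argument under unit-sum utilities. Once this symmetric construction is in place, the algorithm's additional freedom to design districts becomes moot, because no signal in $\succ^*$ distinguishes which circuit is secretly optimal, so no partition can outperform a blind guess on each district.
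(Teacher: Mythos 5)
Your reduction to districted voting is exactly the paper's framing, and the classical Caragiannis--Procaccia bound you invoke is the right starting point for the \emph{single-district} case. However, the step labeled ``Neutralizing the districting freedom'' has a genuine gap, and it is not the obstacle you flag at the end.

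The problem is that symmetry of $\succ^*$ does not prevent a districted algorithm from exploiting the profile. Concretely, in a CP-style instance where each query's cardinal utility concentrates on its top $\sqrt{m}$ circuits (with the secret winner $s$ hidden among them), every query's \emph{first choice} already carries utility $\Theta(1/\sqrt{m})$ under unit-sum utilities. If the algorithm is free to choose any districting of queries into $|\cZ|$ groups and pick a circuit per group, then as long as $|\cZ|$ is modestly large it can assign each query's first-choice circuit to that query's district and collect $\Omega(|\cQ|/\sqrt{m})$ welfare \emph{deterministically}, independent of $s$. This matches the optimum you computed for $M^*$ up to constants, so the distortion is $O(1)$, not $\Omega(\sqrt{m})$. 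Averaging over a uniform $s$ does not help: your Yao step shows each district places mass $1/m$ on the secret $s$ on average, but a district never needed to select $s$ to do well, because the CP cardinalization makes first choices already good. In other words, the standard CP gap between ``picks the secret'' and ``picks something else'' collapses once you allow the mechanism to route different voters to different winners.

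What actually makes the theorem true is the constraint that $\phi$ must come from a \emph{fixed, bounded} hypothesis class $\Phi$, which your proposal never uses. The paper's proof spends most of its effort here: it applies a multi-color balls-and-bins discrepancy bound (Lemma A.3) to construct the query groups $\cQ_1,\dots,\cQ_k$ \emph{adversarially against $\Phi$} so that every $\phi \in \Phi$ necessarily mixes all $k$ query groups within each preimage $\phi^{-1}(z)$ up to small error. Only then, after seeing the algorithm's output $(\phi,g)$, does the utility $u$ get finalized, placing weight on the circuit $s_{i_z}$ that $g(z)$ most disfavors (by pigeonhole, there is one with mass $\le 1/k$), using a Boutilier-et-al.-style construction to keep utilities unit-sum and compatible with the fixed preferences. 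This post-hoc choice is what creates the gap: the algorithm's routing decision is already committed before the cardinal utilities are adversarially pinned down, and the discrepancy bound ensures that no $\phi\in\Phi$ could have separated the groups preemptively. If you want to rescue a Yao-style argument, you would still need the discrepancy lemma (or an equivalent) to argue that $\Phi$ cannot shatter the query groups, which is where the real technical work lies.

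Separately, a minor point: the theorem statement allows the lower bound to degrade as $|\cZ|$ and $\log|\Phi|$ grow (see the minimum in the full statement, Theorem A.1), so any correct proof must exhibit a dependence on model capacity; a proof that ignores $\Phi$ and $\cZ$ entirely, as yours currently does, cannot be right in the unconditional form you propose.
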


The left-hand side of \Cref{eq:distortion} is known as distortion \cite{procaccia2006distortion} and, in our setting, measures the suboptimality of the preference learning algorithm $\cA$.
Our distortion lower bound is increasing in $|\cS_0|$, which is the number of circuits in one's pretrained model and thus a measure of its complexity.
The condition $|\cQ| \gg |\cS|, |\cZ|$ simply rules out unrealistic regimes where models have a dedicated circuit for processing every possible user query or models are able to exactly route each possible user query to the circuit that is individually ideal for them.
The full statement of \Cref{theorem:bounded_computation_formal} in \Cref{appendix:proofs} (\Cref{theorem:bounded_computation_formal_full}) provides a more general distortion bound independent of this condition.
We defer the proof of Theorem \ref{theorem:bounded_computation_formal} to Appendix \ref{appendix:proofs} as well.

At a high-level, we show that a difficult post-training objective must always exist using discrepancy upper bounds.
Given any pretrained model, we can carefully group possible user queries $\cQ$ into $k$ groups, $\cQ_1, \dots, \cQ_k$, such that---from the perspective of any post-trained model---the query groups should ``look'' rather similar.
That is, consider the distribution over pre-trained circuits induced by sampling queries from a group $\cQ_i$ and asking the post-trained model to assign the queries to circuits---this distribution over circuits must be similar in distance to the analogous distribution for any other query group $\cQ_j$.
This step of the argument follows by an application of the probabilistic method to an appropriate balls-and-bins analogy of the post-training problem.
To conclude our proof, we simply need a utility function to demand that queries from different groups $\cQ_i$ be assigned to different circuits; we appeal to existing constructions in social choice theory for this, in particular, a construction of \cite{boutilier2012optimal}.

\paragraph{Inferring cardinal data from noise.}
At first blush, Theorem \ref{theorem:bounded_computation_formal} may seem to contradict intuition from prior works about the optimality of RLHF (e.g., as suggested by the positive results in \cite{wang2023rlhf} and \cite{zhu2023principled}). 
These works study a setting where 
it is possible to 
extrapolate the degree of (average/overall) labeler preference by the noisiness of their labels. 
If labelers consistently indicate that they prefer a response $a_1$ over $a_2$ with a probability that is exponential in the difference between the utilities of $a_2$ and $a_1$, the \emph{variance} of the preferences provides enough information to learn an optimal policy.

However, there are several considerations that limit the degree to which these assumptions can be relied on in practice---especially as, for the most part, noise models from Bradley-Terry are unverifiable. 
First, it is unclear when preference noise can be assumed to provide a consistent indicator of preference strength. For example, as we explore in Section \ref{sec:expts}, distortion disproportionately affects the ability of RLHF to reinforce reasoning capabilities, but it seems unlikely in practice for noise to be informative in this setting.
Second, as noted by \cite{wang2023rlhf}, even when preference noise and preference strength follow some consistent relationship, one would need to know \textit{a priori} what that relationship is. 
Third, even if that relationship \textit{was} known a priori, there are natural noise models---even those satisfying the Bradley-Terry model---where preference data remains insufficient. 
For example, positive results from prior works assumed Bradley-Terry with exponential score functions---that is, $s_i \succ_{q, u} s_j$ with probability proportional to 
\(
\tfrac{\exp(u(q, s_i(q)))}{\exp(u(q, s_j(q)))+\exp(u(q, s_i(q)))}\).
As we show below in Theorem \ref{theorem:bounded_computation_formal_noise}, if we instead had a Bradley-Terry model with linear score functions, i.e. $s_i \succ_{q, u} s_j$ with probability proportional to \(
\tfrac{u(q, s_i(q))}{u(q, s_j(q))+u(q, s_i(q))}\), preference learning can still result in poor post-training.

\begin{restatable}{theorem}{distortion}
\label{theorem:bounded_computation_formal_noise}
Consider any pretrained model \(M_0 = (\phi_0, g_0, \cS_0)\) and post-training algorithm $\cA$.
There always exists a post-training objective, i.e. a utility $u: \cQ \times \cR \to \reals$ we wish to maximize, such that the following holds.
Even if we post-train $M_0$ on  preference data $\succ_u$ with noise consistent with the Bradley-Terry model with linear scores, the post-trained model $M = \cA(M_0, \succ_u)$ we obtain is suboptimal by at least a multiplicative factor compared to the best model $M^*$ that we could have post-trained from $M_0$:
\[
\max_{M^* \in \{(\phi^*, g^*, \cS_0) \mid \phi^* \in \Phi, g^*: \cZ \to \cS_0\}} \frac{\EEs{q \sim \cD}{u(q, M^*(q))}}{\EEs{q \sim \cD}{u(q, {M}(q))}} \geq  \Omega\Big({|\cS_0|}\Big)
\]
when $|\cQ| \gg |\cS|, |\cZ|$.
This lower bound holds even if we limit ourselves to situations where utilities are bounded in $[0, 1]$.
\end{restatable}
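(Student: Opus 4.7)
The plan is to adapt the proof strategy of Theorem \ref{theorem:bounded_computation_formal} and exploit a key property of Bradley-Terry with linear scores: scale invariance. Since $P(s_i \succ_{q,u} s_j) = u(q, s_i(q))/(u(q, s_i(q)) + u(q, s_j(q)))$ depends only on the within-query ratio $u(q, s_i(q))/u(q, s_j(q))$, any algorithm produces identical outputs on utility functions related by an arbitrary per-query positive scaling $u'(q, \cdot) = c_q \cdot u(q, \cdot)$. This gives the adversary an extra degree of freedom not available in the noiseless setting: after observing the algorithm's output, the adversary may still adjust each $c_q$ without altering the observed preference distribution.

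First, I would construct the ratio profile using the discrepancy-style partitioning from the proof of Theorem \ref{theorem:bounded_computation_formal}. Define $r_q: \cS_0 \to [0, 1]$ such that each query $q$ has a unique \emph{favored} circuit $i(q) \in \cS_0$ with $r_q(i(q)) = 1$ and $r_q(s) = \epsilon \coloneqq 1/|\cS_0|$ for $s \neq i(q)$. The assignment $q \mapsto i(q)$ is designed so that, for any pretrained representation $\phi_0$ with $|\cQ| \gg |\cZ|, |\cS_0|$, the favored circuits are distributed roughly uniformly within each representation class $\phi_0^{-1}(z)$. A probabilistic construction (analogous to the discrepancy argument in the proof of Theorem \ref{theorem:bounded_computation_formal}) guarantees this is possible and implies that any post-trained model $M$, which must commit to a distribution over circuits per representation class, can route at most an $O(1/|\cS_0|)$ fraction of queries to their favored circuit.

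Next, since the algorithm only observes ratios (by scale invariance), its output $M = \cA(M_0, \succ_u)$ is determined entirely by the ratio profile $r$ and not by the $c_q$'s. I would thus pick per-query scalings $c_q \in \{\epsilon, 1\}$ adversarially post hoc: set $c_q = \epsilon$ when $M(q) = i(q)$ and $c_q = 1$ when $M(q) \neq i(q)$. Under the resulting utility $u(q, \cdot) = c_q \cdot r_q(\cdot)$---which lies in $[0,1]$ by construction---the algorithm's expected utility satisfies $\EEs{q}{u(q, M(q))} = \epsilon$ (since $c_q \cdot r_q(M(q)) = \epsilon$ whether or not $M$ matches the favored circuit), while the optimal model $M^*(q) = i(q)$ achieves $\EEs{q}{u(q, M^*(q))} = \EEs{q}{c_q} \geq 1 - 1/|\cS_0|$. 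Their ratio yields distortion $\Omega(|\cS_0|)$.

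The main obstacle is the discrepancy step: proving the favored-circuit assignment $i: \cQ \to \cS_0$ can be chosen so that, for any pretrained $\phi_0$, the set $\{i(q) : \phi_0(q) = z\}$ is spread nearly uniformly across $\cS_0$ within each representation $z$. This strengthens the variance-style bound from Theorem \ref{theorem:bounded_computation_formal} to a per-representation uniformity guarantee, which is what enables the extra factor of $\sqrt{|\cS_0|}$ improvement over the noiseless lower bound. A secondary technical consideration is handling randomized mappings $g: \cZ \to \Delta_{\cS_0}$ to verify that no probabilistic routing strategy can match more than $O(1/|\cS_0|)$ of queries' favorites in expectation.
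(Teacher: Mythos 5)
Your high-level strategy is in the right spirit---exploit the scale invariance of Bradley--Terry with linear scores so that the adversary can fix per-query scalings $c_q$ after observing the algorithm's output---and you are correct that this extra freedom (the utility magnitudes $\max_s u_q(s)$ are undetermined by the preference distribution) is what lets the bound be strengthened beyond $\sqrt{|\cS_0|}$. However, the specific adversarial choice of $c_q$ in your construction does not close the argument, and the proposal has a genuine gap at the step where you claim the utility of $M^*$.

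You set $c_q = \epsilon$ when $M(q) = i(q)$, and $c_q = 1$ otherwise, then assert $M^*(q) = i(q)$ achieves $\EEs{q}{u(q,M^*(q))} = \EEs{q}{c_q} \geq 1 - 1/|\cS_0|$. But $M^*$ is also constrained to a model of the form $(\phi^*, g^*, \cS_0)$ with $\phi^* \in \Phi$, and the per-representation uniformity you invoke (needed for your bound on $M$'s hit rate) must hold for \emph{every} $\phi \in \Phi$, or else $M$ itself could escape by choosing a $\phi$ that isn't uniform. So $M^*$ can hit its favored circuits on at most an $O(1/|\cS_0|)$ fraction of queries as well. With $\epsilon = 1/|\cS_0|$ one gets $\E[u(q,M^*(q))] = O(1/|\cS_0|)$ against $\E[u(q,M(q))] = \epsilon = 1/|\cS_0|$, i.e., only constant distortion. (The problem is starkest when $g(z)$ is uniform over circuits: then $\Pr(M(q)=i(q)) = 1/|\cS_0|$ for every query, your $c_q$ rule cannot differentiate queries, and $M$ and $M^*$ end up at parity.) There is also a well-posedness issue: you condition $c_q$ on the realization $M(q)$, but $M(q)$ is random when $g(z) \in \Delta_{\cS_0}$ is non-degenerate, whereas the adversary must commit to a fixed $u$; $c_q$ can depend only on the distribution of $M(q)$, not its realization.

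What the paper's proof does instead is tie the scaling to \emph{representations and circuits} rather than to per-query realizations. Having observed the learned routing $g$, for each representation $z$ the adversary finds, by pigeonhole, a circuit index $i_z$ with $\Pr(g(z) = s_{i_z}) \leq 1/k$; queries in $\cQ_{i_z} \cap \phi^{-1}(z)$ get maximum utility $1$, all others get $\epsilon$. This entangles ``high-stakes'' with ``$M$-routes-poorly'': on high-stakes queries $M$ hits the preferred circuit with probability at most $1/k$, while $M^*$---using the \emph{same} $\phi$ but deterministic $g^*(z) = s_{i_z}$---hits with probability $1$, a factor-$k$ advantage that is entirely missing from your per-query $c_q$. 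The discrepancy lemma then bounds how many queries end up in each $\cQ_{i_z} \cap \phi^{-1}(z)$, on both sides. Finally, note that the explanation you give for the extra $\sqrt{|\cS_0|}$ factor (``per-representation uniformity'') is not the driver: that uniformity is used in the noiseless proof too. What changes is that the noiseless theorem's construction is constrained to $\sum_{s} u_q(s) = 1$, while the Bradley--Terry-with-linear-scores setting pins down only within-query ratios and leaves the magnitude $\max_s u_q(s)$ free, letting the adversary make high-stakes queries a factor of $1/\epsilon$ more important.
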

Note that \Cref{theorem:bounded_computation_formal_noise}'s lower bound is a factor of $\sqrt{|\cS_0|}$ larger (i.e. tighter) than that of \Cref{theorem:bounded_computation_formal}, even though the learning algorithm is afforded more information.
This is because the distortion bound in \Cref{theorem:bounded_computation_formal} holds even when utilities are restricted to sum to 1 for each query, i.e. $\sum_{s \in \cS_0} u(q, s(q)) = 1$ for all $q \in \cQ$.
In contrast, in order to handle the fact that we allow our preference learning algorithms to enjoy Bradley-Terry noise, \Cref{theorem:bounded_computation_formal_noise} instead constructs situations where some queries are inherently harder than others, meaning that utilities do not necessarily sum to 1; this ends up to also allow us to obtain a lower bound on distortion that grows faster with $|\cS_0|$.
The full statement of \Cref{theorem:bounded_computation_formal_noise} in \Cref{appendix:proofs} (\Cref{theorem:bounded_computation_formal_noise_full}) provides a more general distortion bound independent of the condition that $|\cQ| \gg |\cS|, |\cZ|$.

\paragraph{Beyond distortion.}
We have shown that true outcome-based optimization cannot be guaranteed with preference data.
While distortion theory is helpful for proving this limitation, the restrictions of the theory also provides prescriptive value in suggesting some practical workarounds.

One restriction of the theory is that it assumes that utilities are fine-grained.
One workaround is to explicitly instruct human labelers to coarsen their feedback, and ``tie'' responses by default unless one is significantly better than another.
This is not a general solution, however, and would only work for a narrow set of problems where, e.g. binary, notions of success suffice.

Another restriction of the theory is that it assumes only preference data is available.
This leads us to perhaps the most promising workaround: even a small amount of cardinal data (real-valued reward feedback) is sufficient to fill in the gaps left by the use of preference data.
For example, distortion can be effectively avoided with a minimal amount of cardinal data---specifically, logarithmically as much as the amount of preference data.
This cardinal data may take the form of flagging especially poor responses, or specifying the magnitude of a particular preference.
\citet{amanatidis2021peeking} shows that with only $\propto \log^2 |\cS|$ cardinal queries, it is possible to design a voting rule with at most constant $O(1)$ distortion---whereas there is a distortion lower bound of $\Omega(\sqrt{|\cS|})$ without cardinal data.
\begin{theorem}[Corollary 2 of \cite{amanatidis2021peeking}]
    There is a preference learning algorithm, Acceptable Range Voting, that guarantees a constant upper bound on distortion using only $\propto \log^2 |\cS|$ cardinal queries.
\end{theorem}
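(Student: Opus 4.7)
The plan is to analyze the Acceptable Range Voting (ARV) rule, which interleaves ordinal preferences with a small number of cardinal ``peek'' queries. The scheme is best viewed as a binary search over utility thresholds combined with a sampling-based acceptability test at each threshold.

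First I would assume the unit-sum normalization $\sum_{s \in \cS} u(q, s(q)) = 1$ for each query $q$, which up to constants is equivalent to the bounded-utility regime already used in \Cref{theorem:bounded_computation_formal}. Call a circuit $s$ \emph{$(\lambda, c)$-acceptable} if at least a $c$-fraction of queries $q \in \cQ$ satisfy $u(q, s(q)) \geq \lambda$; any such circuit automatically achieves welfare $\Omega(c \lambda \cdot |\cQ|)$, which is a constant fraction of the optimum whenever $\lambda$ is comparable to the per-voter welfare of the optimal circuit. So if we can efficiently certify acceptability at the correct threshold, we are done.

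The algorithm binary searches over a dyadic grid of $O(\log |\cS|)$ thresholds $\lambda \in \{1, 1/2, 1/4, \dots\}$ for the largest $\lambda$ admitting an acceptable circuit. At each threshold, the ordinal preferences produce a short candidate shortlist (e.g.\ plurality leaders together with any circuit ranked first by a non-trivial mass of voters), and for each shortlisted candidate one samples $O(\log |\cS|)$ queries uniformly from $\cQ$, issuing a single cardinal query per sample to estimate its $\lambda$-acceptability to within a constant multiplicative factor via a Chernoff bound. Summed over thresholds, this yields the advertised $O(\log^2 |\cS|)$ cardinal-query budget; the output is the acceptable candidate discovered at the largest successful threshold.

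The distortion bound then follows by comparison with the optimum $s^*$: if $\lambda^*$ is the largest dyadic threshold at which $s^*$ is genuinely acceptable, ARV must succeed at some threshold $\geq \lambda^*/2$, and its winner inherits welfare within a constant factor of $s^*$ by the acceptability-to-welfare lemma above. The main technical obstacle is the joint design of the ordinal shortlist and the cardinal estimator: the shortlist must be short enough to keep the total query budget at $O(\log^2 |\cS|)$, yet broad enough that it provably contains a near-optimal circuit at the right threshold. Establishing this relies on showing that any circuit with welfare $\Omega(1)$ relative to $s^*$ must be plausibly high-ranked by an $\Omega(1)$ fraction of voters in the unit-sum regime, which is precisely the ``peeking'' insight that separates ARV from a naive two-stage ``ordinal filter, then cardinal refinement'' pipeline.
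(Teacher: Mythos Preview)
The paper itself offers no proof of this statement; it is quoted as Corollary~2 of \cite{amanatidis2021peeking} purely to illustrate that a small amount of cardinal feedback can sidestep the ordinal lower bound of \Cref{theorem:bounded_computation_formal}. So the only meaningful comparison is to the mechanism in the cited source, and what you describe is not that mechanism.

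The actual Acceptable Range Voting rule does not build an ordinal shortlist and then sample voters. It spends its $O(\log^2|\cS|)$ cardinal queries \emph{per voter}: for each of $O(\log|\cS|)$ dyadic thresholds it binary-searches that voter's ranking (another $O(\log|\cS|)$ probes) to locate the rank where utility drops below the threshold. This buckets every voter's entire utility vector into logarithmically many levels, so each alternative's social welfare is known up to a constant factor, and returning the estimated maximizer gives constant distortion immediately. There is no candidate filtering and no voter subsampling.

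Your scheme, by contrast, needs a constant-size ordinal shortlist that provably contains a near-optimal alternative, and the specific shortlist you name---plurality leaders, or alternatives ranked first by a nontrivial mass---does not have this property. Take $n$ voters where voter $i$ gives utility $1/2$ to a private favorite $a_i$ and $1/2$ to a common compromise $a_0$, with ranking $a_i \succ_i a_0 \succ_i \text{rest}$: the compromise $a_0$ is welfare-optimal yet ranked first by nobody, so any top-rank shortlist misses it while every shortlisted $a_i$ has welfare only a $\Theta(1/n)$ fraction of optimal. You correctly flag the shortlist design as the crux, but then assert that the ``peeking'' insight of ARV resolves it---in fact ARV resolves it by abandoning the shortlist idea entirely in favor of per-voter bucketing. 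Your reading of the $\log^2|\cS|$ budget as a \emph{total} across all voters, rather than per voter, is also at odds with the cited result and is what forces you into the shortlist-and-sample design in the first place.
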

\paragraph{Connection to existing distortion results.}
We conclude this section by highlighting two key distinctions between prior distortion work and our model. 
One subtlety is that, under our model, it is possible to route different queries to different downstream circuits---that is, $(g \circ \phi)(q)$ can map to different distributions over $\cS$ for different queries---and thus our model of post-training is not directly analogous to an ordinary election. 
Rather, our results are for the scenario where there is both a fixed set of alternatives and a fixed set of ways to draw electoral districts, and the goal is to pick a social welfare-maximizing way of dividing voters into electoral districts and choosing an alternative for each district. To the best of our knowledge, this setting has not been explicitly studied in prior distortion work. 
Second, in prior works, the analogy between RLHF and elections lies in modeling individual annotators as voters, and distortion arises due to heterogeneity or disagreement across annotators (as in \cite{yang-private-comm}). In contrast, under our model, which instead analogizes voters to queries, suboptimality is unavoidable when training any model of finite computational complexity.
Our findings suggest that the limitations of ordinal data extend beyond reconciling diverse beliefs---situations that mirror elections literally---to any scenario constrained by finite inference-time compute, even in the absence of subjective disagreement across human annotators.

\section{Reasoning as a case study}
\label{sec:expts}
In this section, we explore why the limitations of preference data seem to disproportionately impact the effectiveness of RLHF in eliciting reasoning capabilities, as opposed to the successful applications of RLHF like safety training and instruction tuning. 
The problems we have access to ground-truth verifiers for are typically close-ended reasoning tasks with binary notions of success---for instance, AIME problems or coding exercises like Leetcode.
The current generation of reasoning models, which are post-trained for these tasks, are well-known to exhibit a specific set of ``cognitive'' behaviors: backtracking, error correction, verification, intentional explicitness, rigor, and planning \cite{gandhi2025cognitive}. 

We argue that these behaviors can be understood as \textit{robustness}, and, {conversely}, that robustness should be understood as a learned strategy for handling reasoning problems \cite{yu2025benchmarking, zaremba2025trading}.
In the language of our model from Section \ref{sec:model}, these reasoning strategies can be thought of corresponding to particular circuits in $\cS$ that correspond to ``robust'' behaviors, and which we want to reinforce for improved reasoning. 
As we showed in Example \ref{ex:borda}, the primary failure mode of Borda count (and thus RLHF) is the existence of compromise candidates; for reasoning, ``robust'' strategies can be considered to be compromise candidates that do well across the universe of queries, but often lose when examining a single query at a time. 
Our results therefore provide an additional explanation as to why RLHF appears to be especially limited when it comes to eliciting improved reasoning.
We defer interested readers to Appendix~\ref{app:expdetails} for experiment details and Appendix~\ref{app:addfigs} for supplementary figures.

\subsection{Robustness as reasoning strategy}
\label{sec:reasoning_as_robustness}

Existing works have gathered strong evidence that reasoning models are less susceptible to malicious user queries, especially with heavier use of inference-time compute \citep{zaremba2025trading}.
We show that this is part of a broader phenomenon where reasoning models are generally more robust---including to corruptions and perturbations to their \emph{chains-of-thought}.

In Figure \ref{fig:model_comparison}, we show that finetuning a non-reasoning model on reasoning traces improves robustness to three types of chains-of-thought perturbations.
The non-reasoning model (Qwen2.5-7B-Instruct) is significantly more sensitive to perturbations, with all three perturbation types resulting in substantial accuracy drops.  In contrast, a reasoning model (DeepSeek-R1-Distill-Qwen-7B) distilled onto the same non-reasoning model (Qwen2.5-7B-Instruct) exhibited significantly higher resilience---including nearly no degradation from the insertion of random \texttt{not} tokens.
This experiment was performed on problems from LiveBench Reasoning, AIME, and MATH \cite{white2024livebenchchallengingcontaminationfreellm, aime2024,DBLP:conf/nips/HendrycksBKABTS21}.

\begin{figure}[htbp]
  \centering
    \begin{subcaptionbox}{\small Comparison of accuracy between Qwen2.5-7B-Instruct (non-reasoning) and DeepSeek-R1-Distill-Qwen-7B (reasoning) under different output perturbations.
  \label{fig:model_comparison}}[0.48\textwidth]
    {\includegraphics[width=\linewidth]{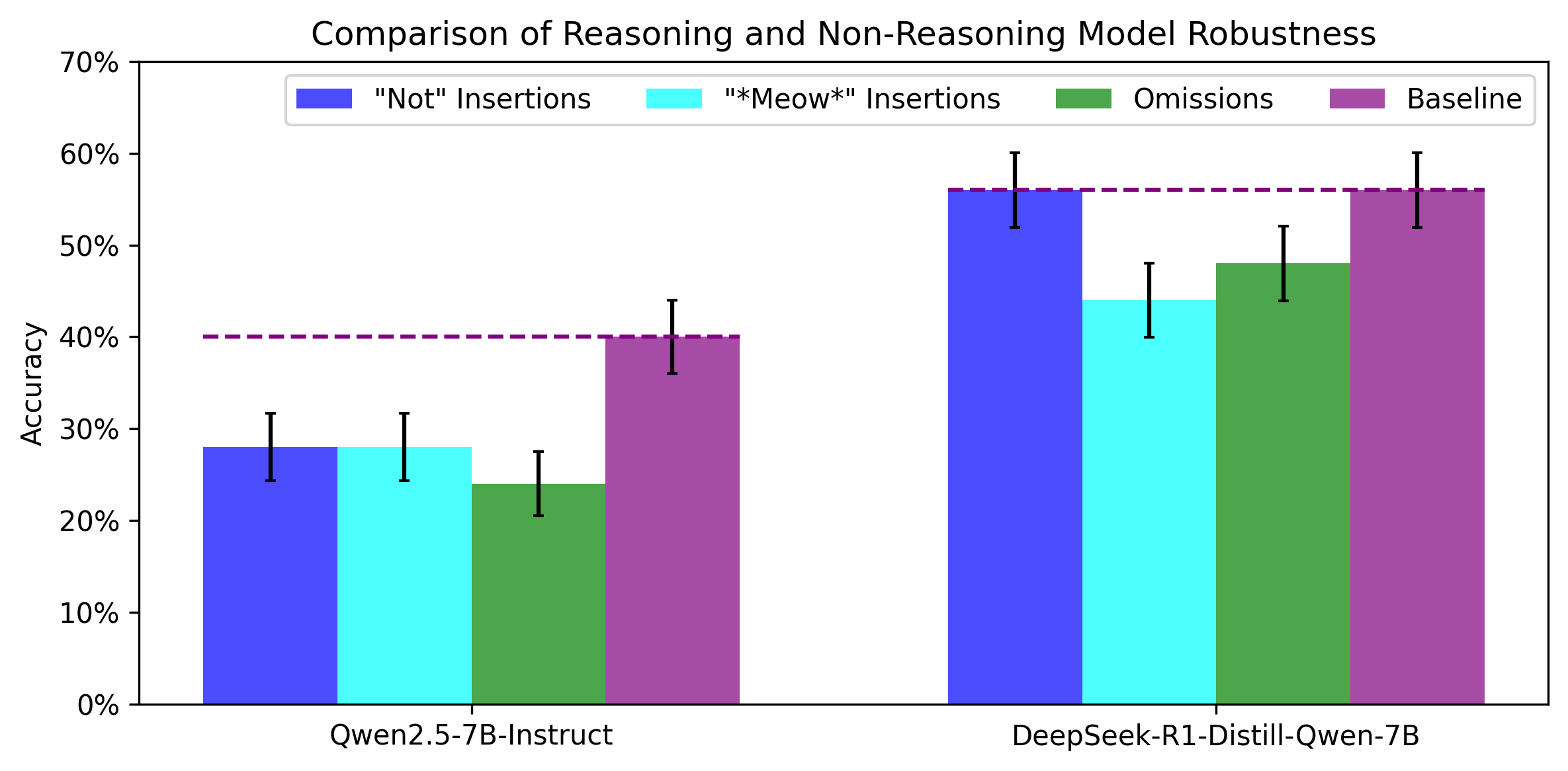}}
  \end{subcaptionbox}
  \hfill
  \begin{subcaptionbox}{\small Distribution of backtracking types identified by Gemini 2.0 Pro in Deepseek R1 responses. The pie chart shows the aggregate counts of each category across responses.\label{fig:deepseek_backtracks_pie}}[0.48\textwidth]
    {\includegraphics[width=\linewidth]{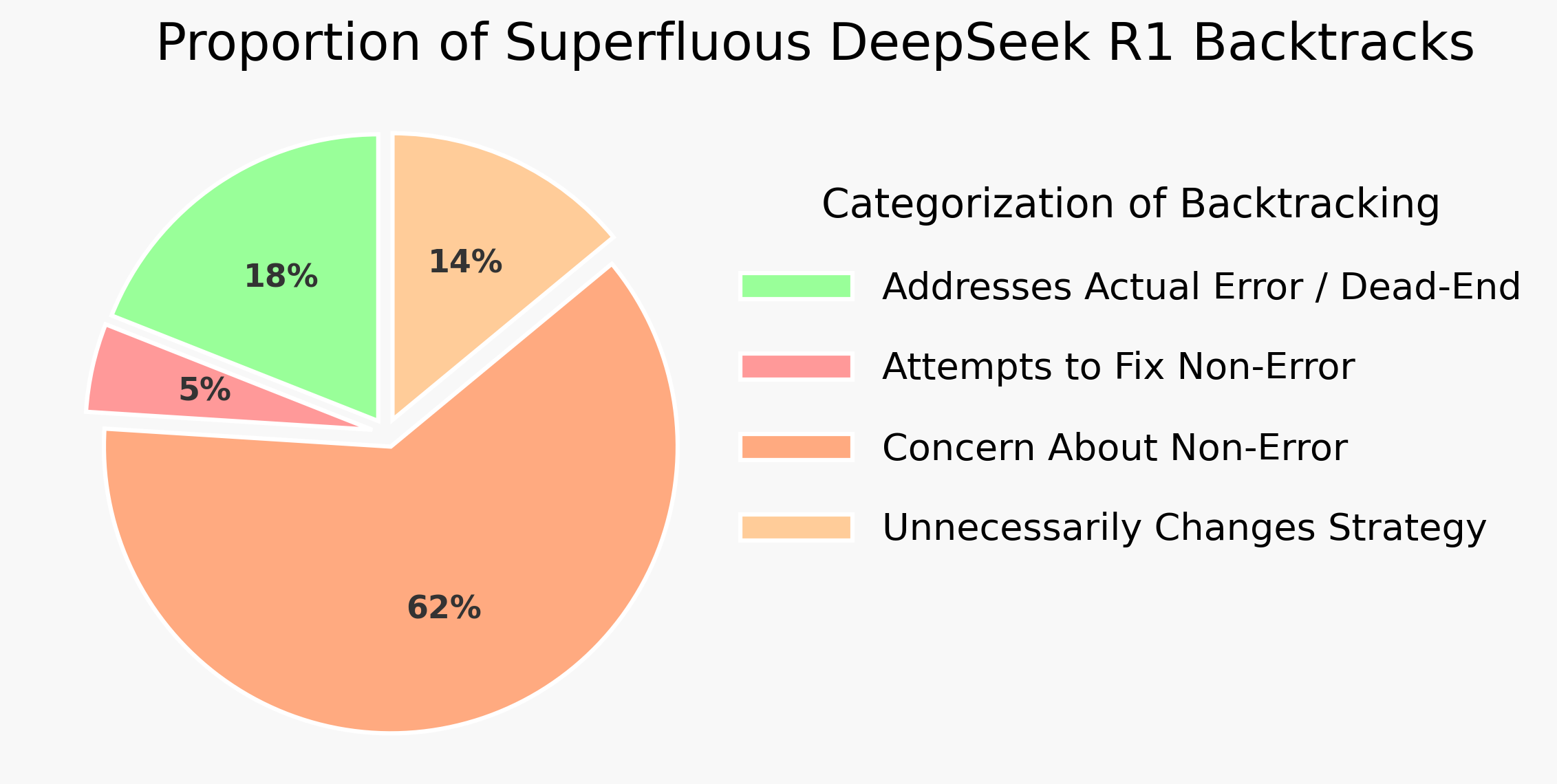}}
  \end{subcaptionbox}
  \caption{Robustness as a reasoning strategy.}
  \label{fig:sidebyside}
\end{figure}

However, our claim is not just that reasoning models exhibit robustness properties (in the sense that they produce correct outputs in the presence of perturbations); rather, we argue that reasoning models have learned robust response \textit{strategies}. 

We work with backtracking as a canonical example of a robust response strategy.
In general, robustness involves paying some trade-offs to mitigate uncertainty;
backtracking allows the model to correct potential intermediate mistakes at the cost of taking longer to arrive at an answer. However, knowing precisely when to backtrack is hard because self-verification is inherently hard.
As a result, reasoning models frequently engage in unnecessary backtracking; in fact, reasoning models will even backtrack when they are relatively confident in their solution---e.g., see Figures \ref{fig:web-of-lies-backtrack} and \ref{fig:livebench-alternatives}. 
Since models cannot be selective in their backtracking behavior, the question of how much to backtrack or self-correct becomes a question of how cautious (i.e., how robust) a model ``wants'' to be. 
{We argue that the \textit{frequency} of unnecessary backtracks indicates that backtracking is a  general strategy learned by reasoning models during post-training. }

In Figure~\ref{fig:deepseek_backtracks_pie}, we analyze the backtracking behavior of Deepseek R1 on a set of reasoning problems again from LiveBench Reasoning, AIME, and MATH \cite{white2024livebenchchallengingcontaminationfreellm, aime2024,DBLP:conf/nips/HendrycksBKABTS21}, with the goal of determining how often backtracking is truly necessary.
However, on these problems, the vast majority of backtracking attempts turn out to serve no purpose at all.   
This result may not be entirely surprising---one might expect that learning to selectively apply a complex reasoning mechanism is much more difficult than learning to aggressively insert pivot tokens somewhat indiscriminately.
In fact, it has been observed that distillation on low-quality examples of reasoning traces still results in improvements
\cite{bespoke_stratos,muennighoff2025s1simpletesttimescaling}: it matters less that the distilled model learns a smart mechanism, and more that the model biases towards frequent backtracking in general.
In the language of our model, RLVR produces a post-trained model whose routing function $g$ heavily favors pretrained circuits that implement backtracking, but, due to finite model capacity (a $\Phi$ of bounded size), is unable to implement a more surgical choice of $(g, \phi)$ that avoids unnecessarily sending queries to backtracking circuits.

\subsection{Preference learning suppresses robustness}
\label{sec:robustness_dispreferred}

This connection between reasoning and robustness provides a language for understanding why reasoning behaviors may be \emph{penalized} by preference data. In particular, observe that if the generation process is reliable or the problem is straightforward, the overhead associated with robustness mechanisms (e.g., backtracking, explicit step-by-step calculations, exploring alternative solutions) can be unnecessary and even detrimental.
Under typical RLHF instructions, labelers may exhibit a bias against these robust reasoning strategies. 
To establish this empirically, we first note that non-reasoning models are able to exhibit reasoning behavior; Figure \ref{fig:existing-robust} below shows one such example in a model not explicitly post-trained for reasoning.

\begin{tcolorbox}[
    colback=gray!10!white,
    colframe=pink!80!black,
    fontlower=\small\sffamily,
    fontupper=\small\sffamily,
    title=Examples of robust behavior already in Gemini 2.0 Flash,
    breakable
]
There are 3 people standing in a line numbered 1 through 3 in a left to right order.
Each person has a set of attributes: Pet, Transport, Food, Nationality.
The attributes have the following possible values:
- Pet: hamster, rabbit, rat
...
- Nationality: japanese, brazilian, indian
and exactly one person in the line has a given value for an attribute.
Given the following premises about the line of people:
- the person that travels by tram is somewhere to the right of the person that has a hamster
...
- the person that likes cranberry is somewhere to the left of the person that travels by tram
- the person that likes watermelon does not own a rabbit or the person that travels by tram doesn't eat watermelon or both
Answer the following question:
At what position is the person that likes avocado? Return a single digit number...

\tcblower

Let's denote the three people as P1, P2, and P3.

1. **The person who is Japanese is on the far left:**  P1 is Japanese.

2. **The person that travels by tram is somewhere to the right of the person that has a hamster:** This means the hamster owner is either P1 or P2.
...

7. **The person that likes watermelon does not own a rabbit or the person that travels by tram doesn't eat watermelon or both:** This is a crucial clue that helps us eliminate possibilities.

Let's analyze possibilities. Since the avocado lover is in P3, and the tram is to the left of the avocado, the tram must be in P2.  The cranberry lover must then be in P1. Since P1 is Japanese, we know the nationality of all three people:

* P1: Japanese, Cranberry
* P2: Tram
* P3: Avocado

{
\color{ForestGreen}
Now let's consider the watermelon and Indian.  Since watermelon is to the right of the tram, it is at P3 and since it's already taken by Avocado. Hence this is a contradiction. But we have watermelon/indian and the tram to deal with.

Considering Point 7 and tram being at P2:  Either watermelon does not own a rabbit (If P3 had watermelon), or the tram does not have watermelon (True). Either way, it does not constrain us to solving a solution at the moment.

Let's go back to the fact that watermelon and Indian have the same parity positions
\dots so there is a contradiction.

However, if we re-examine point 3,  it's not stated watermelon/Indian *have to* be in P3 or P1 necessarily.  
\dots Since tram and avocado at P2 and P3 already eliminate options of placing it in the same positions as well.
} Let's keep the initial arrangement we constructed:

* P1: Japanese, Cranberry
* P2: Tram
* P3: Avocado

Then watermelon is placed in either P1, or P2.  There is a missing piece, as we know there should be the remaining Pet: Rabbit...

Considering the constraints, this fits perfectly, Therefore, Avocado is at position 3.
***3***
\end{tcolorbox}
\noindent\begin{minipage}{\textwidth}
\captionof{figure}{\small Non-reasoning models contain robust behaviors.}
\label{fig:existing-robust}
\end{minipage}

Since non-reasoning models do appear to be capable of robust reasoning strategies, one may hope that RLHF would reinforce such behavior. However, standard preference annotation disprefers and therefore suppresses robust response strategies (prior work makes similar observations, e.g. \cite{yue2025doesreinforcementlearningreally}).
In Figure \ref{fig:rlhf_instructions_comparison}, we show a more systematic study of this phenomenon. 
We select 50 questions from Berkeley MATH \cite{DBLP:conf/nips/HendrycksBKABTS21} and LiveBench \cite{white2024livebenchchallengingcontaminationfreellm}. For each question, we generate a large batch of responses and select two correct answers---one that exhibits backtracking, and one that does not. 
We use Gemini 2.0 Pro to simulate the behavior of a preference annotator, and prompt it to select the preferred response in each pair.
We also experimented between providing our labeler model ``standard'' preference labeling instructions (as explicitly given in \cite{DBLP:conf/nips/Ouyang0JAWMZASR22}) and ``minimal'' labeling instructions (a terse instruction that simply asks for which response is better). 

\begin{figure}[H]
    \centering
    \hfill\includegraphics[width=0.8\textwidth]{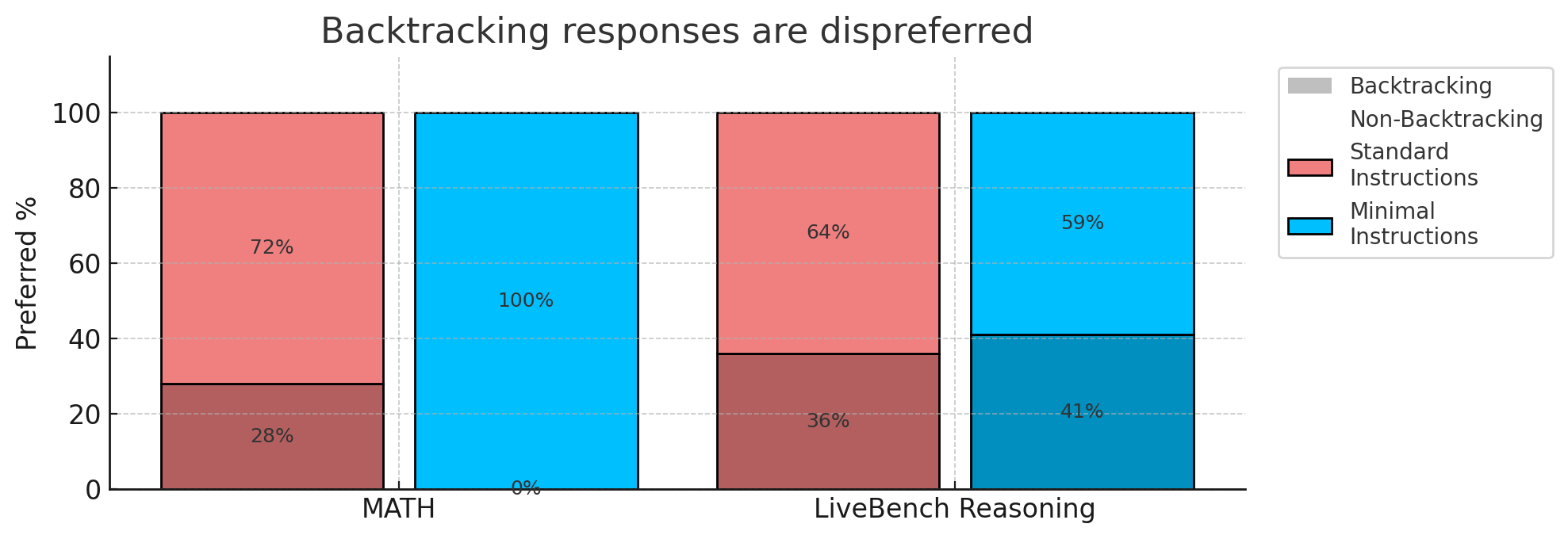}\hspace{0.8cm}
    \caption{\small Comparison of LM (Gemini 2.0 Pro) preferences for succinct, non-backtracking responses versus lengthy, backtracking responses, when both final answers are correct.}
    \label{fig:rlhf_instructions_comparison}
\end{figure}

Across the board, under both variants of labeling instructions, the robustness mechanisms exhibited by the reasoning model (specifically, backtracking) are dispreferred by the preference annotator in a substantial majority of cases. 
This demonstrates that suppression of reasoning is not hypothetical: base models already do possess reasoning behaviors and---unless one's RLHF pipeline is designed to keep an eye out for such behavior (e.g. instructing labelers to specifically reward backtracking), preference datasets will disencourage these behaviors.

In fact, these results almost exactly mimic the scenario illustrated in Example \ref{ex:borda}. The set of circuits $\cS$ learned in pretraining contains both a backtracking strategy $s_\mathrm{Backtrack}$ and a direct strategy $s_\mathrm{Direct}$, which is evidenced by the fact that both styles of correct responses can be generated. However, 
while Section \ref{sec:reasoning_as_robustness} suggests that it may have been ``optimal'' to learn to choose $s_\mathrm{Backtrack}$, Example \ref{ex:borda} predicts and Figure \ref{fig:rlhf_instructions_comparison} confirms that $s_\mathrm{Backtrack}$ often loses out in head-to-head comparisons to $s_\mathrm{Direct}$---which is precisely why the suboptimality in Example \ref{ex:borda} arises.

This also provides a concrete explanation for why the limitations of preference learning we observed in \Cref{theorem:bounded_computation_formal} and \Cref{theorem:bounded_computation_formal_noise} are particularly profound when it comes to RLHF for reasoning. These theorems say that for any preference learning algorithm and any pretrained model, there is a situation where the post-trained model will be lacking.
Robust learning strategies (``compromise candidates'') are useful circuits happen to be exactly the situations that are the main failure mode for the RLHF learning algorithm and---as we saw in this section---reasoning and robustness heavily overlap.

\section{Discussion}
In this work, we seek to develop a conceptual understanding of how preference data as a modality interacts with post-training as a goal.
Our lower bounds in Section \ref{sec:thy} suggest that preference data alone is not enough to make progress in qualitative reasoning tasks.
However, the boundaries of our impossibility result also illuminate pathways to mitigating the lossiness of preference data.
For example, as discussed in Section \ref{sec:thy}, even modest cardinal feedback signals can exponentially decrease information loss \cite{amanatidis2021peeking, DBLP:journals/corr/abs-2009-06543}. 

The question of how to design and calibrate those signals, in a way that is practically impactful, remains an open question, especially as this cardinal data must be an effective proxy for true utility; it is possible that task-specific heuristics may be necessary. For example, if features of effective strategies were known for a particular task, one could explicitly instruct annotators to prioritize them (e.g., for reasoning tasks, labelers could be instructed to prefer strategies that appear to be robust). 
We see these methodological developments as necessary steps for making RL work in domains requiring human feedback. 

More broadly, the categories of tasks we present in Figure \ref{fig:Diagram} were chosen heuristically based on current frontier capabilities. An interesting direction for future work, both theoretically and empirically, is to identify crisper descriptions or categorizations of tasks that we might hope a post-trained model to be able to do. What makes tasks ``similar''? What kinds of data is available, and what kinds of methods can we expect successful? Answering these questions effectively may, perhaps, offer a pathway to sidestep task-specific development towards more principled post-training approaches.

\section{Acknowledgments}
This material is based on work supported by the National Science Foundation Graduate Research Fellowship Program under Grant No. DGE 2146752 (both EZ and JD) and by a Google PhD Fellowship (EZ). The authors thank Kunhe Yang, Nika Haghtalab, and Paul Gölz for valuable conversations and feedback.

\bibliographystyle{alpha}
\newcommand{\etalchar}[1]{$^{#1}$}
\newcommand{\nips}[1]{Advances in Neural Information Processing Systems #1}

\newpage

\appendix
\section{Omitted proofs}
\label{appendix:proofs}

For convenience, we will use the shorthand $u_q: \cS \to \reals$ to denote $u_q(s) \assignequals u(q, s(q))$ and overload this notation to denote $u_q(\phi, g, \cS) = u_q((g \circ \phi )(q))$. We will also drop $u$ from the subscripts of $\succ_u$ and $\succ_{q, u}$ when $u$ is  clear from context.

\subsection{Proof of Theorem \ref{theorem:bounded_computation_formal}}

\origdistortion*

We will prove the following generalization of \Cref{theorem:bounded_computation_formal}, which provides precise rates on our distortion bound. \Cref{theorem:bounded_computation_formal} follows immediately by choosing $|\cQ| \gg |\cS|, |\cZ|$ in \Cref{eq:distortionfull}.

\begin{restatable}[Generalization of \Cref{theorem:bounded_computation_formal}]{theorem}{fullorigdistortion}
\label{theorem:bounded_computation_formal_full}
Consider any pretrained model \(M_0 = (\phi_0, g_0, \cS_0)\) and post-training algorithm $\cA$.
There always exists a post-training objective, i.e. a utility $u: \cQ \times \cR \to \reals$ we wish to maximize, such that: if we post-train $M_0$ on noiseless preference data $\succ_u$, the resulting model $M = \cA(M_0, \succ_u)$ is suboptimal by at least a multiplicative factor compared to the best model $M^*$ that we could have post-trained from $M_0$:
\begin{align}
\label{eq:distortionfull}
\underbrace{\max_{M^* \in \{(\phi^*, g^*, \cS_0) \mid \phi^* \in \Phi, g^*: \cZ \to \cS_0\}} \frac{\EEs{q \sim \cD}{u(q, M^*(q))}}{\EEs{q \sim \cD}{u(q, {M}(q))}}}_{(\mathrm{Distortion})}
\geq 
\Omega\Bigg(\min\Bigg\{\sqrt{|\cS_0|}, \frac{|\cQ|}{\sqrt{|\cQ|(\log |\Phi| + |\cZ|)} + |\cZ|}\Bigg\}\Bigg).
\end{align}
Moreover, this lower bound holds even if we limit ourselves to situations where utilities are bounded in $[0, 1]$ or, even stronger, where $\sum_{s \in \cS_0} u(q, s(q)) = 1$ for all $q \in \cQ$.
\end{restatable}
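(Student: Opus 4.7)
The plan is to invoke Yao's minimax principle: rather than exhibiting a single adversarial utility against each algorithm $\cA$, I would construct a distribution $\cU$ over utilities such that every deterministic $\cA$ has expected distortion $\Omega(\sqrt{|\cS_0|})$ under $\cU$, whence existence of a bad $u$ follows by averaging. A draw from $\cU$ is produced in two layers. First, I uniformly partition $\cQ$ into $k=\Theta(\sqrt{|\cS_0|})$ equally-sized groups $\cQ_1,\dots,\cQ_k$. Second, conditional on the partition, I draw a uniformly random injection $\sigma:[k]\to\cS_0$ and apply a Boutilier-style adversarial construction \cite{boutilier2012optimal} to design $u$ so that (i) within group $\cQ_i$, near-optimal utility is attained only by routing the group's queries predominantly to $\sigma(i)$, yet (ii) the preferences $\succ_u$ are consistent with $\sigma(i)$ being any member of a $\Theta(k)$-sized pool of ``compromise circuits,'' and (iii) per-query utilities lie in $[0,1]$ (or, for the stronger statement, $\sum_{s\in\cS_0}u(q,s(q))=1$).

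The technical core is a discrepancy lemma, established via the probabilistic method, formalizing the claim that no feasible post-trained model can separate the groups $\cQ_1,\dots,\cQ_k$. Fix any admissible $M=(\phi,g,\cS_0)$ with $\phi\in\Phi$. Viewing the restriction of $\phi$ to $\cQ$ as an assignment of balls (queries) to bins (elements of $\cZ$), a Chernoff bound controls $|\cQ_i\cap\phi^{-1}(z)|$ around its expectation $|\phi^{-1}(z)|/k$ with deviation $O(\sqrt{(|\cQ|/k)\log(|\Phi||\cZ|k)})$. A union bound over the $|\Phi|\cdot|\cZ|\cdot k$ relevant events then shows that with positive probability over the partition, for every admissible $(\phi,g)$ the induced conditional distributions $\pi_i^{\phi,g}\in\Delta_{\cS_0}$ (obtained by sampling $q$ uniformly from $\cQ_i$ and evaluating $(g\circ\phi)(q)$) are pairwise close in total variation, uniformly in $i$. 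Choosing $k$ as large as this concentration permits yields $k=\Theta(\min\{\sqrt{|\cS_0|},|\cQ|/(\sqrt{|\cQ|(\log|\Phi|+|\cZ|)}+|\cZ|)\})$, matching the two branches of \Cref{eq:distortionfull}.

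To conclude, I would combine the two layers as follows. Conditioned on the discrepancy event (a property of the random partition alone, independent of $\cA$), each group $\cQ_i$ is routed through essentially the same mixture $\bar\pi$ over $\cS_0$. Since $\sigma$ is injective, $\sum_i \bar\pi(\sigma(i))\le 1$, so on average $M$ places only $O(1/k)$ mass per group on its ``correct'' circuit $\sigma(i)$. Meanwhile the optimal $M^\ast\in\Phi$, tailored to $\sigma$, routes each $\cQ_i$ entirely to $\sigma(i)$ and by property (i) reaps $\Theta(k)$ times more utility, yielding distortion $\Omega(k)$. I expect the main obstacle to be step (ii) of the utility construction: one must ensure that $\succ_u$ reveals essentially no information about $\sigma$ despite $u$ being a deterministic function of it, while simultaneously respecting the unit-sum normalization used for the stronger form of the theorem. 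This is precisely the reason the bound stalls at $\sqrt{|\cS_0|}$ rather than $|\cS_0|$, mirroring the classical social-choice distortion barrier of \cite{boutilier2012optimal}; relaxing to $\sum_s u(q,s(q))$ possibly less than $1$ is what allows \Cref{theorem:bounded_computation_formal_noise} to sharpen the rate to $\Omega(|\cS_0|)$ under Bradley-Terry noise.
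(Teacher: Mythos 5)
Your high-level outline correctly identifies the two ingredients the paper uses: a discrepancy/balls-and-bins argument to partition $\cQ$ into $k$ groups that every admissible $(\phi,g)$ must treat roughly interchangeably, and a Boutilier-style utility construction that forces a $\Theta(k)$-gap between the algorithm's routing and the optimum. However, your proposed route through Yao's minimax with a random injection $\sigma$ does not match the paper's argument, and it leaves exactly the hard step unresolved.

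The paper does not randomize over utilities at all. It fixes the preferences $\succ_u$ once and for all $(q\in\cQ_i$ ranks $s_i$ first, then the remaining circuits in a fixed ascending order$)$, independent of any hidden target. Since $\cA$ is deterministic, $M=(\phi,g,\cS_0)=\cA(M_0,\succ_u)$ is then fully determined. Only \emph{after} observing $M$ does the adversary commit to a particular $u$ consistent with $\succ_u$: for each representation $z$ hit by $\phi$, pigeonhole yields an index $i_z$ with $\Pr(g(z)=s_{i_z})\le 1/k$; queries in $\cQ_{i_z}\cap\phi^{-1}(z)$ get utility concentrated almost entirely on $s_{i_z}$, while queries in $\phi^{-1}(z)\setminus\cQ_{i_z}$ get essentially flat utility $\approx 1/m$ per circuit (with the small $v(j)$ perturbation to remain consistent with $\succ_u$). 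The adversary thus places the value mass precisely on the circuit that $M$ \emph{de-emphasizes}. This adaptive step is what makes the bound go through without ever worrying about ``leakage'' of a hidden $\sigma$ through the preferences.

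By contrast, your proposal needs step (ii) — a construction in which $\succ_u$ reveals ``essentially no information about $\sigma$'' — and you acknowledge you do not have it. This is a genuine gap, not a technicality: if every $q\in\cQ_i$ ranks $\sigma(i)$ first then $\sigma$ is fully revealed, and if all groups share the same order then $\sigma$ cannot be an injection that the optimal model can exploit. The paper dissolves this tension by making the construction adaptive rather than distributional; there is no $\sigma$ to hide. You also slightly misattribute the $\sqrt{|\cS_0|}$ barrier entirely to an information-theoretic indistinguishability argument; in the paper it arises from balancing $k\le\sqrt{m}$ against the term $|\cQ|/m$ in the upper bound on $M$'s utility (the ``flat'' utility mass that must be given to the non-target group members to satisfy the unit-sum constraint). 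Finally, on the discrepancy step the paper uses McDiarmid on the aggregate quantities $\sum_u\max_j(\cdot)$ and $\sum_u\min_j(\cdot)$ with a union bound only over $\phi\in\Phi$, rather than per-$(z,j)$ Chernoff bounds plus a union over $|\Phi|\cdot|\cZ|\cdot k$ events; both give qualitatively similar rates, but the paper's form is what actually appears in Lemma~\ref{lem:coloring_sqrtn_structured}.
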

\begin{proof}
Let $m = |\cS_0|$. %
To motivate our multi-color discrepancy argument, we draw an analogy between our post-training setting and a multi-color variant of the balls-and-bins problem.
Specifically, let us analogize queries $\cQ$ to balls $B$, internal representations $\cZ$ to groups (colors) $G$, representation functions $\Phi$ to mappings (colorings) $S \subset G^B$, and query groups $\{\cQ_i\}_i$ to bins $[k]$.
The goal is to argue that there is a way to place our balls in our bins such that for any coloring $g \in S$, the total number of balls belonging to the most popular bin within their own color community is small.

\begin{restatable}{lemma}{balls}\label{lem:coloring_sqrtn_structured}
Let $B$ be a set of $n$ balls, $G$ a set of $r$ groups, and $S \subset G^B$ a set of mappings from balls to groups. There is always a mapping $f$ from balls $B$ to $k$ bins such that for any grouping $g \in S$:
\begin{enumerate}
    \item The total number of balls belonging to the most frequent bin in their respective groups is close to $n/k$:
$$ \sum_{u \in G} \max_{j \in [k]} |\{b \in g^{-1}(u) \mid f(b)=j\}| \leq  O\Bigg(\frac{n}{k} + \sqrt{n r \log k} + r \log k + \sqrt{n \log(|S|)}\Bigg). $$
    \item The total number of balls belonging to the least frequent bin in their respective groups is bounded:
$$ \sum_{u \in G} \min_{j \in [k]} |\{b \in g^{-1}(u) \mid f(b)=j\}| \geq  \Omega\Bigg(\frac{n}{k}\Bigg) - O\Big( \sqrt{n r \log k} +r \log k  + \sqrt{n \log(|S|)}\Big). $$
\end{enumerate}
\end{restatable}

Let us now consider the assignment of queries $\cQ$ into $k$ query groups $\cQ_1, \dots, \cQ_{k}$ that is implied by \Cref{lem:coloring_sqrtn_structured}.
We have the guarantee that for any representation function $\phi \in \Phi$ and internal representation $z \in \cZ$, the function $\phi$ converts a roughly equal number of queries from each query group $\cQ_i$ into the internal representation $z$:
\[\sum_{z \in \cZ} \max_{j \in [k]} |\{q \in \phi^{-1}(z) \mid q \in \cQ_j\}| \leq  O\Bigg(\frac{|\cQ|}{k} + \sqrt{|\cQ| |\cZ| \log k} + |\cZ| \log k  + \sqrt{|\cQ| \log(|\Phi|)}\Bigg).\]
Similarly, for any $\phi \in \Phi$:
\[\sum_{z \in \cZ} \min_{j \in [k]} |\{q \in \phi^{-1}(z) \mid q \in \cQ_j\}| \geq  \Omega\Bigg(\frac{|\cQ|}{k}\Bigg) - O\Big(\sqrt{|\cQ| |\cZ| \log k} +|\cZ| \log k  + \sqrt{|\cQ| \log(|\Phi|)}\Big).\]

We'll now explicitly define an unfortunate post-training objective, which we can recall corresponds to maximizing some utility function $u$.
We start by defining what preferences $\succ_u$ we want our objective $u$ to induce.
Afterwards, we'll finish defining exactly what the objective $u$ is.

\paragraph{Defining $\succ_u$.}
Let us define queries $q \in \cQ_i$ as preferring strategy (i.e., circuit) $s_i$ the most, i.e. $s_i \succ_q s_j$ for all $j \neq i$, and then prefer the remaining strategies $s_j$ in ascending order of index, i.e. $s_j \succ s_{j+1} \succ \dots$.
Because our preference learning algorithm $\cA$ must produce a post-trained model using only access to the pretrained model $M_0 = (\phi_0, g_0, \cS_0)$ and the preferences $\succ_u$, our post-trained model $M = (\phi, g, \cS_0) = \cA(M_0, \succ_u)$ is now well defined.
Recall that, in our notation, $M$ maps queries to internal representations using $\phi \in \Phi$ and uses internal representations to assign queries to circuits via $g \in \cS_0^\cZ$. 

\paragraph{Defining $u$.}
Let us construct a utility function inspired by \citet{boutilier2012optimal}.
We will first define a helper function $v(j) = \tfrac{2}{m-1} (1 - \tfrac{j}{m})$ to help with some tie-breaking nuances. 
We can observe that $v$ is decreasing in its argument, and that $\sum_{j \in [m]} v(j) = 1$ and $v(j) \in [0, 1]$.

Let us next fix any representation $z \in \cZ$ that our post-trained model $M = (\phi, g, \cS_0)$ maps at least one query to, i.e. $\phi^{-1}(z) \neq \emptyset$.
Because $g(z)$ is a probability distribution over the circuits in $\cS_0$, the pigeonhole principle implies that there must exist some circuit index $i_z \in [k]$ such that $g(z)$ places little weight on circuit $s_{i_z} \in \cS_0$: specifically $\Pr(g(z) = s_{i_z}) \leq 1/k$.
We thus have, for each internal representation $z \in \cZ$, a circuit index $i_z \in [k]$ that our post-trained model places a lot of weight on.

Let us still fix a representation $z \in \cZ$ and now consider any query $q$ in our query group $\cQ_{i_z}$ that has the representation $z$, i.e. $q \in \cQ_{i_z} \cap \phi^{-1}(z)$.
We will define our post-training objective for the query $q$, denoted by the utility $u_q$, to be $u_q(s_j) = (1-\epsilon) \cdot \1[s_j = s_{i_z}] + \epsilon \cdot v(j)$, for some small $\epsilon > 0$.
For any query $q$ that also has the representation $z$ but belongs to a different query group $\cQ_i$ where $i \neq i_z$, i.e. $q \in \phi^{-1}(z) \setminus \cQ_{i_z}$, we will define the post-training objective for the query $q$ to be $u_q(s_j) = (1-\epsilon) \cdot \frac{1}{m} + \epsilon \cdot v(j)$.
Observe that these utilities remain consistent with the preferences $\succ_u$, sum to 1 for any given query $\sum_{s \in \cS_0} u_q(S) = 1$, and are bounded in $[0, 1]$.
At a high-level, this means that, for the post-trained model to be high-utility, queries that have the internal representation $z$ should ideally belong to the query group $\cQ_{i_z}$.

\paragraph{Bounding the utility of $M$.}
We now turn to bounding the utility of our post-trained model.
For convenience, let $n_{i, z} = |\{q \in \phi^{-1}(z) \mid q \in \cQ_{i}\}|$ denote the number of queries in query group $\cQ_i$ that share the internal representation $z$.
If we follow our construction above of an unfortunate post-training objective, then, for all representations $z \in \cZ$, the utility of our post-trained model---and in particular its query-to-circuit assignment scheme $g \circ \phi$---can be written as
\begin{align*}
\sum_{z \in \mathrm{Range}(\phi)} 
\sum_{q \in \phi^{-1}(z)} u_q((g \circ \phi)(q))
&\leq
\sum_{z \in \mathrm{Range}(\phi)} 
\Pr(g(z) = s_{i_z})
\cdot
n_{i_z, z}
+
\frac 1 m \sum_{i \neq i_z}
n_{i, z}
\\&\leq
\frac{|\cQ|}{m} + 
\frac{(m-1)\sum_{z \in \mathrm{Range}(\phi)}  n_{i_z, z}}{km}
\end{align*}
where the looseness in the first inequality follows from dropping $\epsilon$ terms.
In other words, the utility of our post-trained model is now directly upper bounded by the number of queries that belong to the query group $\cQ_{i_z}$ that their internal representation $z$ suggests that they should belong to.

To upper bound $n_{i_z, z}$, we will use our discrepancy bound from the first part of \Cref{lem:coloring_sqrtn_structured}, which guarantees that the total number of queries belonging to the ``right'' query group is not that large:
\[
\sum_{z \in \mathrm{Range}(\phi)} 
n_{i_z, z}
\leq  O\Bigg(\frac{|\cQ|}{k} +  \sqrt{|\cQ| |\cZ| \log k} +|\cZ| \log k  + \sqrt{|\cQ| \log(|\Phi|)} \Bigg).
\]
Plugging this back into our upper bound on our post-trained model's utility, we have:
\[
\sum_{q \in \mathcal{Q}} u_q((g \circ \phi)(q))
\leq  O\Bigg( \frac{|\cQ|}{m} + \frac{|\cQ|}{k^2} + \frac{\sqrt{|\cQ| |\cZ| \log k}}{k} + \frac{|\cZ| \log k}{ k} + \frac{\sqrt{|\cQ| \log(|\Phi|)}}{k} \Bigg)
.
\]

\paragraph{Bounding the utility of $M^*$.}
We now want to upper bound the utility of the best possible model $M^*$ that we could have post-trained from our pretrained base $M_0$.
Let us therefore fix any post-trained model $M^* = (\phi^*, g^*, \cS_0)$.
The utility of this model $M^*$ in our constructed post-training objective is simply:
\[
\sum_{z \in \mathrm{Range}(\phi)} 
\sum_{q \in \phi^{-1}(z)} u_q((g^* \circ \phi^*)(q))
\geq
\sum_{z \in \mathrm{Range}(\phi)} 
(1-\epsilon)n_{i_z, z}.
\]
This time, we want to lower bound the number of queries that belong to the query group $\cQ_{i_z}$ that their representations $z$ suggest.
We will therefore use the second part of \Cref{lem:coloring_sqrtn_structured}, which guarantees that
\[
\sum_{z \in \mathrm{Range}(\phi)} 
n_{i_z, z}
\geq \Omega\Bigg( \frac{|\cQ|}{k} \Bigg) - O\Big(\sqrt{|\cQ| |\cZ| \log k} +|\cZ| \log k + \sqrt{|\cQ| \log(|\Phi|)}\Big).
\]
Plugging this back into our upper bound on the hypothetical post-trained model $M^*$'s utility, we have:
\[
\sum_{q \in \mathcal{Q}} u_q((g^* \circ \phi^*)(q))
\geq \Omega\Bigg( \frac{|\cQ|}{k} \Bigg) - O\Big(\sqrt{|\cQ| |\cZ| \log k} + |\cZ| \log k + \sqrt{|\cQ| \log(|\Phi|)}\Big).
\]

\paragraph{Bounding distortion.}
We now prove our main claim by bounding distortion.
Let us fix $k \leq \sqrt{m}$ so $m$ dominates $k^2$.
Let us now adopt the shorthand:\[A = k \cdot O\Big(\sqrt{|\cQ| |\cZ| \log k} + |\cZ| \log k + \sqrt{|\cQ| \log(|\Phi|)}\Big).\]
We want to ensure 
\(
|\cQ|-A \geq \Omega(1)
\), so we can choose:
\[
k = \Theta\left( \frac{|\cQ|}{\sqrt{|\cQ||\cZ| \log |\cQ|} + |\cZ| \log |\cQ| + \sqrt{|\cQ| \log |\Phi|}} \right).
\]
\noindent
We can therefore bound
\begin{align}
\max_{M^* \in \{(\phi^*, g^*, \cS_0) \mid \phi^* \in \Phi, g^*: \cZ \to \cS_0\}} \frac{\EEs{q \sim \cD}{u(q, M^*(q))}}{\EEs{q \sim \cD}{u(q, {M}(q))}} 
&\geq \frac{k(\Omega(|\cQ|) - O(A))}{O(|\cQ| + A)} \nonumber \\
\nonumber
&\geq {\Omega\Bigg(\min\Bigg\{\sqrt{|\cS_0|}, \frac{|\cQ|}{\sqrt{|\cQ|(\log |\Phi| + |\cZ|)} + |\cZ|}\Bigg\}\Bigg).}
\end{align}
\end{proof}

It now remains to prove our \Cref{lem:coloring_sqrtn_structured}. This argument will follow simply from applying the probabilistic approach to classic balls-and-bins high-probability bounds.

\balls*
\begin{proof}[Proof of \Cref{lem:coloring_sqrtn_structured}]
We will begin by constructing a random assignment $f: B \to [k]$ of balls $B$ to bins $[k]$.
We will sample this assignment $f$ by randomly assigning each ball $b \in B$ to a bin $f(b) \in [k]$ uniformly and independently at random.
Let us denote the number of balls in bin $j$ and group $u$ by \[N_{u,j}(f, g) \assignequals |\{b \in g^{-1}(u) \mid f(b)=j\}|.\]
We want to bound two quantities for any given group mapping $g \in S$:
\begin{enumerate}
    \item Letting $M^{\max}_u(f, g) \assignequals \max_{j\in[k]} N_{u,j}(f, g)$ denote the maximum number of balls of any single bin in group $u$, we have \[N_{max}(f, g) \assignequals \sum_{u=1}^r M^{\max}_u(f, g).\]
    \item Letting $M^{\min}_u(f, g) \assignequals \min_{j\in[k]} N_{u,j}(f, g)$ denote the minimum number of balls of any single bin in group $u$, we have: \[N_{min}(f, g) \assignequals \sum_{u=1}^r M^{\min}_u(f, g).\]
\end{enumerate}

We now bound the expectations of $N_{max}(f, g)$ and $N_{min}(f, g)$ for a fixed group mapping $g \in S$.

\begin{claim}\label{claim:expectation_both}
For any fixed $g \in S$, $ \mathbb{E}[N_{max}(f, g)] \le O\Big(\frac{n}{k} + \sqrt{nr \log k} +  r \log k\Big)$ and $ \mathbb{E}[N_{min}(f, g)] \ge \Omega\Big(\frac{n}{k}\Big) - O(\sqrt{nr \log k} + r \log k)$.
\end{claim}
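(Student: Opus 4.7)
\textbf{Proof proposal for \Cref{claim:expectation_both}.}
The plan is to fix $g \in S$ and condition on it, then exploit the fact that the random assignment $f$ places each ball independently and uniformly at random into one of $k$ bins. For each group $u \in G$, let $n_u \assignequals |g^{-1}(u)|$, so that $\sum_u n_u = n$, and observe that $N_{u,j}(f,g) \sim \mathrm{Bin}(n_u, 1/k)$ with the $k$ variables $\{N_{u,j}\}_{j\in[k]}$ coupled through the constraint $\sum_j N_{u,j} = n_u$. Crucially, different groups $u$ are handled separately, and within a single group this is exactly a classical balls-and-bins problem with $n_u$ balls and $k$ bins.

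The first step is a per-group concentration bound. For each group $u$ and bin $j$, a standard Chernoff bound for $\mathrm{Bin}(n_u, 1/k)$ gives $\Pr[|N_{u,j} - n_u/k| \geq t] \leq 2\exp(-\Omega(t^2 / (n_u/k + t)))$. Taking a union bound over the $k$ bins and then integrating the tail inequality, I would derive the standard expectation bounds
\begin{align*}
\mathbb{E}[M^{\max}_u] &\le \frac{n_u}{k} + O\Bigl(\sqrt{(n_u/k)\log k} + \log k\Bigr), \\
\mathbb{E}[M^{\min}_u] &\ge \frac{n_u}{k} - O\Bigl(\sqrt{(n_u/k)\log k} + \log k\Bigr),
\end{align*}
where the additive $\log k$ terms absorb the regime in which $n_u$ is small (and in particular keep the lower bound sensible even when the square-root term is not the dominating correction). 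This is the main place to be careful: the bound must remain meaningful uniformly over groups with wildly different sizes $n_u$, some of which may even be zero, and the additive $\log k$ slack is what makes this work.

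The second step is to sum over groups and apply Cauchy--Schwarz. Summing the first line over $u \in G$ gives
\begin{align*}
\mathbb{E}[N_{\max}(f,g)] \;\le\; \frac{n}{k} + O\!\left(\sqrt{\tfrac{\log k}{k}}\sum_{u \in G}\sqrt{n_u}\right) + O(r \log k),
\end{align*}
and then Cauchy--Schwarz yields $\sum_{u \in G}\sqrt{n_u} \le \sqrt{r \sum_u n_u} = \sqrt{rn}$, so the middle term is at most $O(\sqrt{nr\log k / k}) \le O(\sqrt{nr\log k})$. The same sum-and-Cauchy--Schwarz step applied to the lower bound yields
\[
\mathbb{E}[N_{\min}(f,g)] \;\ge\; \frac{n}{k} - O\bigl(\sqrt{nr\log k}\bigr) - O(r\log k),
\]
as required. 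The only real obstacle, as mentioned, is ensuring the per-group bounds are applied with the right additive $\log k$ slack so that the sum over groups is not contaminated by unlucky small groups; once that is handled, Cauchy--Schwarz is what converts the naive $\sum_u \sqrt{n_u/k} \le r\sqrt{n/(rk)}$ estimate into the tight $\sqrt{nr\log k}$ bound quoted in the claim.
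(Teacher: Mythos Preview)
Your proposal is correct and follows essentially the same approach as the paper: per-group balls-and-bins concentration bounds, summed via linearity of expectation, with Cauchy--Schwarz to control $\sum_u \sqrt{n_u}$. Your per-group deviation term $\sqrt{(n_u/k)\log k}$ is actually slightly sharper than the paper's $\sqrt{n_u\log k}$, but both suffice after summing and the overall argument is identical.
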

\begin{proof}[Proof of Claim \ref{claim:expectation_both}]
Let $n_u = |g^{-1}(u)|$ be the number of balls in group $u$.
By linearity of expectation, $\mathbb{E}[N_{max}(f, g)] = \sum_{u=1}^r \mathbb{E}[M^{\max}_u(f, g)]$ and $\mathbb{E}[N_{min}(f, g)] = \sum_{u=1}^r \mathbb{E}[M^{\min}_u(f, g)]$.

We can then directly apply the balls-and-bins inequalities for $\mathbb{E}[M^{\max}_u(f, g)] \le O(\tfrac{n_u}{k} + \sqrt{n_u \log k} + \log k)$ and $\mathbb{E}[M^{\min}_u(f, g)] \ge \Omega(\tfrac{n_u}{k}) -  O(\sqrt{n_u \log k} + \log k)$.
Summing over all $r$ groups $u$ with Cauchy-Schwarz gives
\begin{enumerate}
    \item $ \mathbb{E}[N_{max}(f, g)] \le \sum_{u=1}^r O(\tfrac{n_u}{k} + \sqrt{n_u \log k} + \log k) = O(\tfrac{n}{k} + \sqrt{n r \log k} + r \log k)$.
    \item $\mathbb{E}[N_{min}(f, g)] \ge \sum_{u=1}^r \Omega(\tfrac{n_u}{k}) - O(\sqrt{n_u \log k} + \log k) = \Omega(\tfrac{n}{k}) - O(\sqrt{n r \log k} + r \log k)$.
\end{enumerate}
\end{proof}

We now use McDiarmid's inequality to show that the random functions $N_{max}(f, g)$ and $N_{min}(f, g)$ concentrate quickly.
We proceed by first verifying that the functions $f(1), \dots, f(n) \mapsto N_{max}(f, g)$ and $f(1), \dots, f(n) \mapsto N_{min}(f, g)$ satisfy the 1-bounded difference property.

\begin{fact}\label{fact:bounded_diff_both}
If binnings $f$ and $f'$ differ only in how they bin a single ball, then for any $g \in S$, $|N_{max}(f, g) - N_{max}(f', g)| \le 1$ and $|N_{min}(f, g) - N_{min}(f', g)| \le 1$.
\end{fact}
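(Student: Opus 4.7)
The plan is to prove both bounded-difference claims by first \emph{localizing} the change to a single group, and then applying a standard $\ell_\infty$-stability bound for the $\max$ (respectively $\min$) of a perturbed vector. The key observation is that although $N_{max}$ and $N_{min}$ are sums over all $r$ groups, altering the bin of a single ball only perturbs per-bin counts within one group, so the summation does not amplify the per-coordinate Lipschitz constant.

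First, I would decompose
\[
N_{max}(f, g) - N_{max}(f', g) \;=\; \sum_{u=1}^{r} \bigl(M^{\max}_u(f, g) - M^{\max}_u(f', g)\bigr),
\]
and similarly for $N_{min}$. Let $b^\star \in B$ denote the unique ball on which $f$ and $f'$ disagree, and let $u^\star \in G$ be the unique group with $b^\star \in g^{-1}(u^\star)$. For every $u \neq u^\star$, the counts $N_{u, j}(\cdot, g)$ are identical under $f$ and $f'$ for every bin $j$, so $M^{\max}_u(f, g) = M^{\max}_u(f', g)$ and $M^{\min}_u(f, g) = M^{\min}_u(f', g)$, and those summands vanish. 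This reduces each claim to showing $|M^{\max}_{u^\star}(f, g) - M^{\max}_{u^\star}(f', g)| \leq 1$ and the analogous statement for the min.

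Second, within group $u^\star$, moving $b^\star$ from bin $f(b^\star)$ to bin $f'(b^\star)$ alters the vector of per-bin counts by $-1$ in one coordinate, $+1$ in another, and $0$ elsewhere. This is an $\ell_\infty$-bounded perturbation of size $1$, and the maps $x \mapsto \max_j x_j$ and $x \mapsto \min_j x_j$ are each $1$-Lipschitz in the sup norm (formally, from $\max_j (a_j + c_j) \leq \max_j a_j + \|c\|_\infty$ and its reverse, and similarly for $\min$). This immediately yields the required bound of $1$ and concludes the argument.

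I do not expect any genuine obstacle here; this is a routine locality and Lipschitz observation. The only subtle point worth flagging is the warning above: one must not apply a bounded-difference estimate directly to the full sum over groups, which would give a Lipschitz constant of $r$ rather than $1$. The proof essentially hinges on the fact that a single ball belongs to exactly one group, which collapses $r{-}1$ of the $r$ summands to zero before any Lipschitz bound is invoked.
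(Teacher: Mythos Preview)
Your proposal is correct and follows essentially the same approach as the paper: both arguments localize the change to the single group $u^\star = g(b^\star)$ containing the perturbed ball (so all other summands cancel), and then use that $\max_j$ and $\min_j$ are $1$-Lipschitz under an $\ell_\infty$ perturbation of size $1$. Your version is a bit more explicit about the localization and the sup-norm Lipschitz formulation, while the paper compresses this into a one-line inequality chain, but the underlying reasoning is identical.
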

\begin{proof}[Proof of \Cref{fact:bounded_diff_both}]
Let $f$ and $f'$ be two binnings such that they differ only for a single ball $b$. Let $f(b) = j_1$ and $f'(b) = j_2$. Let $u = g(b)$ be the group assigned to ball $b$.
By definition,
$$ N_{max}(f, g) - N_{max}(f', g) = \max_{j \in [k]} N_{u,j}(f, g) - \max_{j \in [k]} N_{u,j}(f', g) \leq  \max_{j \in [k]} N_{u,j}(f, g) - N_{u,j}(f', g) \leq 1. $$
Since the case for $N_{max}(f', g) - N_{max}(f, g) \leq 1$ is symmetric, we have $|N_{max}(f, g) - N_{max}(f', g)| \le 1$.
The proof for $|N_{min}(f, g) - N_{min}(f', g)| \le 1$ follows identically.
\end{proof}

By applying \Cref{fact:bounded_diff_both}, we have from McDiarmid's inequality that for any fixed group mapping $g \in S$ and any deviation tolerance $t > 0$:
\[P(N_{max}(f, g) - \mathbb{E}[N_{max}(f, g)] \ge t) \le \exp\left(\frac{-2t^2}{n}\right),\] and
\[P(N_{min}(f, g) - \mathbb{E}[N_{min}(f, g)] \le -t) \le \exp\left(\frac{-2t^2}{n}\right).\]
Let $\mu_{max}^* = \sup_{g \in S} \mathbb{E}[N_{max}(f, g)]$ and $\mu_{min}^* = \inf_{g \in S} \mathbb{E}[N_{min}(f, g)]$, and consider the ``bad'' event $\mathcal{E}$:
$$ \mathcal{E} = \{ f \mid \exists g \in S: (N_{max}(f, g) > \mu_{max}^* + t) \lor (N_{min}(f, g) < \mu_{min}^* - t) \} $$
Taking a union bound over $g \in S$, we  have
$P(\mathcal{E})  \leq 2 |S| \exp\left(\frac{-2t^2}{n}\right)$.
Let $t = \sqrt{n \log(2 |S|)}$. Then,
$$ P(\mathcal{E}) \le 2 |S| \exp\left(\frac{-2 (n \log(2 |S|))}{n}\right) = \frac{1}{2|S|} < 1. $$
There thus exists at least one binning $f$ for which $\mathcal{E}$ does not occur. This $f$ therefore satisfies both conditions in the lemma claim.
\end{proof}

\subsection{Proof of \cref{theorem:bounded_computation_formal_noise}}

\distortion*

The proof of this theorem follows similarly to that of \Cref{theorem:bounded_computation_formal}.
Once again, we will prove a generalization of \Cref{theorem:bounded_computation_formal_noise} that provides an explicit distortion bound.

\begin{restatable}{theorem}{distortionfull}
\label{theorem:bounded_computation_formal_noise_full}
Consider any pretrained model \(M_0 = (\phi_0, g_0, \cS_0)\) and post-training algorithm $\cA$.
There always exists a post-training objective, i.e. a utility $u: \cQ \times \cR \to \reals$ we wish to maximize, such that the following holds.
Even if we post-train $M_0$ on  preference data $\succ_u$ with noise consistent with the Bradley-Terry model with linear scores, the post-trained model $M = \cA(M_0, \succ_u)$ we obtain is suboptimal by at least a multiplicative factor compared to the best model $M^*$ that we could have post-trained from $M_0$:
\[
\max_{M^* \in \{(\phi^*, g^*, \cS_0) \mid \phi^* \in \Phi, g^*: \cZ \to \cS_0\}} \frac{\EEs{q \sim \cD}{u(q, M^*(q))}}{\EEs{q \sim \cD}{u(q, {M}(q))}} \geq \tilde \Omega\left(\min\left\{ |\cS_0|, \frac{|\cQ|}{\sqrt{|\cQ|(\log |\Phi| + |\cZ|)} + |\cZ|} \right\}\right).
\]
This lower bound holds even if we limit ourselves to situations where utilities are bounded in $[0, 1]$.
\end{restatable}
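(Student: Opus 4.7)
The plan is to follow the proof of \Cref{theorem:bounded_computation_formal_full} while exploiting a scale-invariance property specific to the Bradley-Terry linear-score model. The key observation is that $\Pr[s_i \succ_q s_j] = u_q(s_i)/(u_q(s_i)+u_q(s_j))$ depends only on within-query \emph{ratios} of utilities, so replacing $u_q$ by $\lambda_q u_q$ for any scalar $\lambda_q > 0$ leaves the preference distribution---and hence any algorithm's (possibly randomized) output $M = \cA(M_0, \succ_u)$---unchanged. This gives the adversary the extra freedom of first fixing the per-query utility shape (which determines $M$), and only then choosing per-query scales $\lambda_q$ without contradicting what the algorithm received as input.

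We would first apply \Cref{lem:coloring_sqrtn_structured} to partition queries $\cQ$ into $k$ groups $\cQ_1, \ldots, \cQ_k$ and define utility shapes exactly as in the noiseless proof: queries in $\cQ_i$ have the shape most preferring circuit $s_i$. Since the shape fixes the Bradley-Terry preference distribution, the algorithm produces the same $M = (\phi, g, \cS_0)$ as in the noiseless analysis; the pigeonhole step identifying, for each $z \in \cZ$, an ``unlucky'' index $i_z$ with $\Pr[g(z) = s_{i_z}] \leq 1/k$ carries through verbatim. We would then scale the \emph{type-1} queries $q \in \cQ_{i_z} \cap \phi^{-1}(z)$ by $\lambda_q = 1$ and the \emph{type-2} queries $q \in \phi^{-1}(z) \setminus \cQ_{i_z}$ by $\lambda_q = \Theta(1/m)$. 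All utilities remain in $[0, 1]$, but the aggregate contribution of type-2 queries to any model's expected utility is shrunk by a factor of $m$.

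Repeating the bookkeeping of the noiseless proof with these scaled utilities gives $\EEs{q \sim \cD}{u_q(M(q))} \leq O(|\cQ|/k^2 + \lambda|\cQ|/m + A'/k)$, where $A'$ collects the lower-order discrepancy terms from \Cref{lem:coloring_sqrtn_structured}, while the best post-trained model still achieves $\EEs{q \sim \cD}{u_q(M^*(q))} \geq \Omega(|\cQ|/k) - O(A')$ by routing type-1 queries to their preferred circuits. The noiseless proof was forced to take $k \leq \sqrt{m}$ in order to balance the $|\cQ|/m$ term from unscaled type-2 queries against $|\cQ|/k^2$; the per-query scaling eliminates this bottleneck and lifts the allowed range of $k$ to $\Theta(m)$, yielding a distortion bound of $\Omega(k) = \Omega(|\cS_0|)$.

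The main obstacle will be the bookkeeping around lower-order discrepancy terms at this aggressive choice of $k$: we need $|\cQ|/k$ to dominate $A'$, which, since $A' = O(\sqrt{|\cQ|(|\cZ|\log k + \log|\Phi|)} + |\cZ|\log k)$, caps $k$ at $O(|\cQ|/(\sqrt{|\cQ|(\log|\Phi|+|\cZ|)}+|\cZ|))$ up to log factors. Choosing $k$ as the minimum of this quantity and $m$, and plugging back into the distortion ratio, then yields the claimed $\tilde \Omega(\min\{\cdot, \cdot\})$ expression. A minor secondary check is that utilities remain strictly positive so that the Bradley-Terry preferences are well-defined; this is immediate from $\lambda_q, \epsilon > 0$.
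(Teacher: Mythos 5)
Your proposal matches the paper's own proof in essentially every structural step: both exploit the within-query scale-invariance of the linear Bradley-Terry model (ratios $u_q(s_i)/u_q(s_j)$ are pinned down, but per-query scales are free), both let the adversary commit to the shape, observe $M = \cA(M_0, \succ_u)$, and only then choose the per-query scale; both reuse Lemma~\ref{lem:coloring_sqrtn_structured} and the pigeonhole step to define $i_z$; and both conclude that removing the $\sum_s u_q(s) = 1$ normalization is exactly what lifts the $k \le \sqrt{m}$ ceiling to $k \le m$. The only cosmetic difference is the numerical choice of scale for type-2 queries: you take $\lambda_q = \Theta(1/m)$ on top of the noiseless $O(1/m)$ base shape (so type-2 max utility $\asymp 1/m^2$), whereas the paper sets the type-2 max utility directly to $\epsilon = O(1/|\cQ|)$; both choices suppress the type-2 contribution to at most the same order as the type-1 contribution $O(|\cQ|/k^2)$ in the relevant regime $k \lesssim \sqrt{|\cQ|}$, which (together with $k \le |\cS_0|$) is the regime in which the bound is applied.

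One small imprecision in your write-up: you say the algorithm ``produces the same $M$ as in the noiseless analysis.'' That need not be true---the noisy oracle is a different object---but it is also unnecessary, since the pigeonhole step and the subsequent scaling argument work for \emph{whatever} $M$ the algorithm outputs after seeing the Bradley-Terry preference distribution. The paper phrases this correctly: it fixes the shape, obtains $M$ from the (noisy) oracle, and only then fixes the scales.
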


\begin{proof}
Let us again draw an analogy between queries $\cQ$ and balls $B$,
representations $\cZ$ and groups $G$, representation functions $\Phi$ and mappings $S$, and query groups $\{\cQ_i\}_i$ and bins $[k]$.
This time, we will assign queries $\cQ$ into $k$ query groups $\cQ_1, \dots, \cQ_{k}$.
Once again we will choose these query groups to be those implied by \Cref{lem:coloring_sqrtn_structured}.

Next, we fix some $R \in (0, 1)$ that is close to $1$.
We'll define each query group as inducing the following preferences: queries $q \in \cQ_i$ prefer strategy $s_i$ the most with $\Pr(s_i \succ_q s_j) = R$ for all $j \neq i$.
This restricts $\tfrac{u_q(s_i)}{ u_q(s_j)} = \tfrac{R}{1-R}$.
Note that our preferences $\succ_u$ are now random, since we are in the setting where the outcomes of preference comparisons is random.
One way to think about this is, rather than $\succ_u$ being a proper ordering, to think of $\succ_u$ as an oracle that returns to us the \emph{probability} that one alternative wins over another if one repeated samples noisy preference comparisons between the two.

Now, let $M = (\phi, g, \cS_0) = \cA(M_0, \succ_u)$ denote the model we learned after post-training on these noisy preferences.
Note that, even though we have already fixed $\succ_u$, we have only limited our choice of $u$ to be that where $\tfrac{u_q(s_i)}{ u_q(s_j)} = \tfrac{R}{1-R}$ for all $j \neq i$.
This means that we still have the freedom to set the scale of the maximum attainable utility $\max_{s \in \cS_0} u_q(s)$ for each query $q \in \cQ$.

We will now fix any representation $z \in \cZ$ that has a non-empty pre-image $\phi^{-1}(z) \neq \emptyset$.
Because $g(z)$ is a probability distribution over circuits $\cS_0$, the pigeonhole principle means that there must exist some group index $i_z \in [k]$ where $g(z)$ places little weight on circuit $s_{i_z} \in \cS_0$: $\Pr(g(z) = s_{i_z}) \leq 1/k$.
We will define the maximum utility of any query $q$ in group $\cQ_{i_z}$ with representation $z$ to be $\max_S u_q(S) = 1$.
For queries with representation $z$ but that belong to a different group $\cQ_i$, where $i \neq i_z$, we will instead give them a maximum utility of $\max_S u_q(S) = \epsilon$ for some small $\epsilon$. 

\paragraph{Bounding the utility of $M$.}
Let $n_{i, z} = |\{q \in \phi^{-1}(z) \mid q \in \cQ_{i}\}|$ denote the number of queries that our post-trained model maps to the representation $z$ and that also belong to the correct query group for $z$.
If we follow this construction of our post-training objective $u$ for all $z \in \cZ$, the utility of our post-trained model $M$ can be written as
\[
\sum_{z \in \mathrm{Range}(\phi)} 
\sum_{q \in \phi^{-1}(z)} u_q((g \circ \phi)(q))
\leq
|\cQ| \epsilon + \sum_{z \in \mathrm{Range}(\phi)} n_{i_z, z}
\left( \tfrac{1-R}{R} + \tfrac{1}{k}\right).
\]
To upper bound $n_{i_z, z}$, we will use the first part of \Cref{lem:coloring_sqrtn_structured}, which guarantees that
\[
\sum_{q \in \mathcal{Q}} u_q((g \circ \phi)(q))
\leq \frac{1}{k} \cdot O\Bigg(\frac{|\cQ|}{k} +  |\cZ| \log k  + \sqrt{|\cQ| \log(|\Phi|)} + \sqrt{|\cQ| |\cZ| \log k} + k \cdot |\cQ| \cdot \Big(\epsilon + \tfrac{1-R}{R}\Big)\Bigg).
\]
Letting $\epsilon = O(\tfrac 1 {|\cQ|})$ and $R = \Omega(\tfrac \epsilon {\epsilon + 1})$, the last summand is suppressed to $|\cQ| \cdot (\epsilon + \tfrac{1-R}{R}) \leq O(1)$.

\paragraph{Bounding the utility of $M^*$.}
Consider any possible post-trained model $M^* = (\phi^*, g^*, \cS_0)$ that we could attain.
For each representation $z \in \cZ$, let it route to the ``correct'' circuit $g^*(z) = s_{i_z}$ .
The utility of this model $M^*$ can be written as
\[
\sum_{z \in \mathrm{Range}(\phi)} 
\sum_{q \in \phi^{-1}(z)} u_q((g^* \circ \phi^*)(q))
\geq
\sum_{z \in \mathrm{Range}(\phi)} 
n_{i_z, z}.
\]
Using the second part of \Cref{lem:coloring_sqrtn_structured}, 
\[
\sum_{z \in \mathrm{Range}(\phi)} 
n_{i_z, z}
\geq \Omega\Bigg(\frac{|\cQ|}{k}\Bigg) - O\Bigg(|\cZ| \log k + \sqrt{|\cQ| |\cZ| \log k} + \sqrt{|\cQ| \log( |\Phi|)}\Bigg).
\]

\paragraph{Bounding distortion.}
Let us now adopt the shorthand:
\[
A = k \cdot O(\sqrt{|\cQ| |\cZ| \log k} + |\cZ| \log k + \sqrt{|\cQ| \log(|\Phi|)})
\]
Since we want
\(
|\cQ|-A \geq \Omega(1)
\), we will choose, subject to $k \leq |\cS_0|$, \[
k = \Theta\left( \frac{|\cQ|}{\sqrt{|\cQ||\cZ| \log |\cQ|} + |\cZ| \log |\cQ| + \sqrt{|\cQ| \log |\Phi|}} \right).
\]

\noindent
We can thus lower bound
\begin{align}
\max_{M^* \in \{(\phi^*, g^*, \cS_0) \mid \phi^* \in \Phi, g^*: \cZ \to \cS_0\}} \frac{\EEs{q \sim \cD}{u(q, M^*(q))}}{\EEs{q \sim \cD}{u(q, {M}(q))}} 
&\geq \Omega\left(\frac{k(|\cQ| - A)}{|\cQ| + A + 1}\right) \nonumber \\
\label{eq:eq2}
&\geq { \tilde \Omega\left(\min\left\{ |\cS_0|, \frac{|\cQ|}{\sqrt{|\cQ|(\log |\Phi| + |\cZ|)} + |\cZ|} \right\}\right).} \nonumber
\end{align}
\end{proof} 

\section{Omitted experimental details}
\label{app:expdetails}

Our experiments are performed on the AIME 2024 \cite{aime2024}, Berkeley MATH \cite{DBLP:conf/nips/HendrycksBKABTS21}, and LiveBench (dated October 21, 2024) \cite{white2024livebenchchallengingcontaminationfreellm} datasets, all of which are publicly licensed.
Our experiments are performed on models belonging to the Gemini 2.0 family, including the Gemini-2.0-Flash and Gemini-2.0-Flash-Thinking models.

\subsection{Experimental details for \Cref{fig:model_comparison}.}
For this experiment, we selected five questions from each of the following datasets: AIME 2024, Berkeley MATH test (PRM800K test split), LiveBench Reasoning Zebra Puzzles, LiveBench Reasoning Spatial Reasoning, and LiveBench Reasoning Web-of-Lies-v2 (all LiveBench questions dated October 21, 2024).  We ran both the Qwen 2.5-7B-Instruct and Deepseek-R1-Distill-Qwen-2.5-7B models with a temperature of 0.5 and a maximum token limit of 400,000, terminating upon encountering a stop token.
We introduced three types of perturbations to the model outputs:
\begin{enumerate}
    \item \textbf{``Not'' Insertion:} Every 100 tokens, we inserted the word ``not''.
    \item \textbf{``Meow'' Insertion:} Every 100 tokens, we inserted the phrase ``*\emph{MEOW}*''.
    \item \textbf{Omission:} Every 100 tokens, we deleted five consecutive characters, starting from the fifth character \emph{before} the current head and ending at the tenth character before the head.
\end{enumerate}
For this experiment, the offset for the ``Omissions'' perturbations was chosen to avoid directly deleting newly generated tokens, which would have no effect on the model's output.

In contrast to some existing experiments on the adversarial robustness of reasoning models, e.g. \cite{zaremba2025trading}, this experiment focuses on corruptions to a model's generated chains-of-thought, rather than adversarial manipulations of user queries.

\subsection{Experimental details for \Cref{fig:deepseek_backtracks_pie}.}
For this experiment, we selected two questions from each of the following datasets: 2024 AIME, Berkeley MATH (PRM800K test split), LiveBench Reasoning Zebra Puzzles, LiveBench Reasoning Web-of-Lies-v2, and LiveBench Reasoning Spatial Reasoning.
We use Gemini 2.0 Pro (exp 02-05), with light spot-checking, to identify and categorize instances of backtracking in the Deepseek R1 model's responses. This process consists of:
\begin{enumerate}
    \item Identify all potential instances of backtracking, and determine whether each instance was a genuine case of backtracking.
    \item Analyze the surrounding text to determine if it was \emph{necessary} (i.e., correcting a genuine error).
    \item Subcategorize unnecessary instances as attempting to fix a non-error, expressing concern about a non-error, or unnecessarily changing strategy.
\end{enumerate}
\noindent
In \Cref{fig:deepseek_backtracks_pie}, we categorized unnecessary instances of backtracking into one of three categories:
\begin{itemize}
    \item \textbf{Attempts to Fix Non-Error:} The model attempts to correct a part of its reasoning that was already correct.
    \item \textbf{Concern About Non-Error:} The model expresses doubt or uncertainty about a correct step without identifying a specific error.
    \item \textbf{Unnecessarily Changes Strategy:} The model abandons a valid line of reasoning and adopts a different approach without a clear reason.
\end{itemize}

\begin{tcolorbox}[title=Original Question,breakable]
    In this question, assume each person either always tells the truth or always lies. Luna is at the barbershop... Does the person at the farm tell the truth? Think step by step, and then put your answer in \textbf{bold} as a list of three words, yes or no (for example, \textbf{yes, no, yes}). If you don't know, guess.
    
    \begin{tcolorbox}[title=Category 1: Attempts to Fix Non-Error,
        colback=gray!10!white,
        colframe=red!80!black,
        fontlower=\small\sffamily,
        fontupper=\small\sffamily,breakable]
    ...let me clarify:
    
    - The person at the farm says the person at the amusement park lies.
    
    - The person at the amusement park tells the truth.
    
    Therefore, the farm person's statement is false, meaning the farm person lies.
    
    But earlier, the person at the farm says the person at the amusement park lies, which is true because the amusement park tells the truth. Wait, no. If the farm person says the amusement park lies, and the amusement park tells the truth, then the farm person is lying. So the farm person lies.
    
    ...
    \end{tcolorbox}
    
    \begin{tcolorbox}[title=Category 2: Concern About Non-Error,
        colback=gray!10!white,
        colframe=orange!80!black,
        fontlower=\small\sffamily,
        fontupper=\small\sffamily,breakable]
    ...
    
    Quan: At the zoo, which tells the truth.
    
    Maya: At the farm. The person at the farm says the person at the amusement park lies. But since the amusement park tells the truth, the person at the farm must be lying. So Maya lies.
    
    Now, Maya says the person at the farm lies. Since Maya lies, the opposite is true, so the person at the farm tells the truth. Wait, but earlier I thought the farm person says the amusement park lies, which is true, so the farm person tells the truth. So that's consistent.
    
    ...
    \end{tcolorbox}
    
    \begin{tcolorbox}[title=Category 3: Unnecessarily Changes Strategy,
        colback=gray!10!white,
        colframe=orange!80!white,
        fontlower=\small\sffamily,
        fontupper=\small\sffamily,breakable]
    ...
    
    First, I'll list out all the places and the people associated with them.
    
    - Skate Park: The person there tells the truth.
    ...
    - Airport: The person says the person at the zoo tells the truth.
    - Zoo: The person tells the truth.
    
    Wait, this seems a bit confusing. Let me try to organize this better.
    
    First, identify who is where and what they say.
    
    1. Skate Park: Person tells the truth.
    2. Barbershop: Person tells the truth.
    3. Nia: At the cafe, which tells the truth.
    
    ...
    \end{tcolorbox}
    
\end{tcolorbox}

\subsection{Experimental details for \Cref{fig:rlhf_instructions_comparison}}

For this experiment, we selected 50 questions and answers from the PRM800K split of the Berkeley MATH dataset and 50 questions and answers from the test split of the LiveBench dataset.
For each question, we generated 10 responses from the Gemini 2.0 Flash-001 model at a high temperature (1.9) and a maximum token limit of 8192.
This high temperature was used to encourage diverse responses.
We then scored all 500 responses and filtered out all incorrect answers.
From the remaining responses, we identified a correct answer \emph{without} any instances of backtracking and a correct answer \emph{with} backtracking.
Backtracking instances were identified by using Gemini 2.0 Flash-001 (at temperature 0 and max tokens 8192) to search for phrases indicative of backtracking (e.g., "Wait," "Let me check," "However," etc.).
We then presented these pairs of responses (succinct non-backtracking vs. lengthy backtracking) to Gemini 2.0 Pro, acting as an evaluator, and asked it to choose a preferred response according to one of two possible labeler instructions: the first closely mirrors the RLHF instructions used in \cite{DBLP:conf/nips/Ouyang0JAWMZASR22}; the latter is a minimal set of instructions asking the model to exercise its own judgment.
The results, presented in \Cref{fig:rlhf_instructions_comparison},
strongly indicate a preference for the shorter, non-backtracking responses, even though both responses in each pair were correct.

\section{Additional figures}
\label{app:addfigs}

\subsection{Robustness to Perturbations}
For completeness, we provide below in \Cref{tab:model_accuracy_with_drops} the numbers depicted in \Cref{fig:model_comparison}.

\begin{table}[htbp]
\centering
\caption{Accuracy (with Standard Error) and Percentage Change from Baseline for Model Variants}
\label{tab:model_accuracy_with_drops}
\begin{tabular}{@{}lcc@{}}
\toprule
Model & Qwen2.5-7B-Instruct & DeepSeek-R1-Distill-Qwen-7B \\
\midrule
``Not'' Insertions & \SI{28}{\percent} (\(\pm\)\SI{3.83}{\percent}) \textcolor{red}{[-30\%]} & \SI{56}{\percent} (\(\pm\)\SI{4.05}{\percent}) \textcolor{red}{[0\%]} \\
\addlinespace
``\textit{Meow}'' Insertions & \SI{28}{\percent} (\(\pm\)\SI{3.83}{\percent}) \textcolor{red}{[-30\%]} & \SI{44}{\percent} (\(\pm\)\SI{4.07}{\percent}) \textcolor{red}{[-21\%]} \\
\addlinespace
Omissions & \SI{24}{\percent} (\(\pm\)\SI{3.57}{\percent}) \textcolor{red}{[-40\%]} & \SI{48}{\percent} (\(\pm\)\SI{4.08}{\percent}) \textcolor{red}{[-14\%]} \\
\addlinespace
Baseline & \SI{40}{\percent} (\(\pm\)\SI{4.00}{\percent}) & \SI{56}{\percent} (\(\pm\)\SI{4.05}{\percent}) \\
\bottomrule
\end{tabular}
\end{table}

\subsection{RLHF Simulation}
We further conducted two variations of the experiment depicted in \Cref{fig:rlhf_instructions_comparison}.
In the ``Min-to-Min'' variant, we compared the shortest non-backtracking correct answer against the shortest backtracking correct answer, controlling for response length. Even with this control, we observed a consistent preference for non-backtracking approaches, suggesting that the preference goes beyond simply favoring brevity.
In the ``Thinking'' variant, we replaced Gemini 2.0 Pro with Gemini 2.0 Flash Thinking for our comparisons. Interestingly, this variant showed an even stronger preference for non-backtracking approaches on the LiveBench dataset (74-78\% for non-backtracking compared to only 19\% for backtracking). %

\begin{table}[h]
\centering
\begin{tabular}{@{}llcccc@{}}
\toprule
\multicolumn{1}{c}{\multirow{2}{*}{\textbf{Variant}}} & \multicolumn{1}{c}{\multirow{2}{*}{\textbf{Instructions}}} & \multicolumn{2}{c}{\textbf{LiveBench}} & \multicolumn{2}{c}{\textbf{MATH}} \\
\cmidrule(lr){3-4} \cmidrule(lr){5-6}
 &  & \textbf{BT} & \textbf{Non-BT} & \textbf{BT} & \textbf{Non-BT} \\
\midrule
\multirow{2}{*}{\textbf{Original}} & Standard & 41\% & 52\% & 22\% & 78\% \\
 & Short & 30\% & 70\% & 6\% & 94\% \\
\midrule
\multirow{2}{*}{\textbf{Min-to-Min}} & Standard & 44\% & 56\% & 28\% & 72\% \\
 &  Short & 30\% & 70\% & 22\% & 78\% \\
\midrule
\multirow{2}{*}{\textbf{Thinking}} & Standard & 19\% & 74\% & 28\% & 72\% \\
 & Short & 19\% & 78\% & 22\% & 78\% \\
\bottomrule
\\
\end{tabular}
\caption{Preference comparisons between backtracking (BT) and non-backtracking (Non-BT) responses across different labeler instructions and datasets. We tested three experimental variants: 1) Original - comparing shortest non-backtracking correct vs. longest backtracking correct answers, 2) Min-to-Min - comparing shortest non-backtracking correct vs. shortest backtracking correct answers to control for response length, and 3) Thinking - using Gemini 2.0 Flash Thinking instead of Gemini 2.0 Pro for comparison. Non-backtracking approaches consistently achieve higher accuracy across all variants, particularly with shorter instructions, highlighting the persistent tension between robustness mechanisms and performance.}
\label{tab:accuracy_comparison}
\end{table}

\subsection{Additional Examples of Backtracking}

\begin{tcolorbox}[
    colback=gray!10!white,
    colframe=pink!80!black,
    fontlower=\small\sffamily,
    fontupper=\small\sffamily,
    title=LiveBench Web-of-Lies-v2 Question and Deepseek R1's Response,
    breakable
]
In this question, assume each person either always tells the truth or always lies. The person at the campground says the person at the train station tells the truth... Does the person at the movie theater tell the truth? Does the person at the campground tell the truth? Think step by step, and then put your answer in **bold** as a list of three words, yes or no (for example, **yes, no, yes**). If you don't know, guess.

\tcblower

Okay, let me try to work through this logic puzzle step by step.  There are a lot of statements here, so I need to organize the information carefully. Let's start by listing out all the given facts and statements, and then see how they connect.
First, the people involved and their locations:
Mateo is at the library, Olivia is at the train station, Beatriz is at the cafe, Grace is at the shopping mall, Kehinde is at the movie theater, Charlie is at the campground, and Zara is at the aquarium... says the person at the library (Mateo) lies.
\textcolor{red}{
 \textbf{Campground (Charlie)}: Also, the person at the campground tells the truth. (Wait, this is stated as a fact? Let me check again. The last bullet point says: "The person at the campground tells the truth. Charlie is at the campground." So that's a given fact. So Charlie is a truth-teller. Wait, that's important. So the campground person (Charlie) is a truth-teller. Therefore, his statements are true. So that's a key starting point.
    Wait, let me confirm. The list includes "The person at the campground tells the truth." So that's a fact given in the problem, not a statement made by someone. So Charlie is a truth-teller. Got it.}

\end{tcolorbox}

\noindent\begin{minipage}{\textwidth}
\captionof{figure}{\small Deepseek R1's response to a LiveBench Reasoning question; the model double-checks a stated fact.}
\label{fig:web-of-lies-backtrack}
\end{minipage}

\begin{tcolorbox}[
    colback=gray!10!white,
    colframe=pink!80!black,
    fontlower=\small\sffamily,
    fontupper=\small\sffamily,
    title=LiveBench Math Question and Model Response,
    breakable
]
Triangle $QAC$ has side lengths in arithmetic progression, and the smallest side has length $6.$ If the triangle has an angle of $120^{\circ},$ what is the area of $QAC$?
\\
$\textbf{(A) }20\sqrt{2}\qquad\textbf{(B) }14\sqrt{2}\qquad\textbf{(C) }8\sqrt{6}\qquad\textbf{(D) }15\sqrt{3}\qquad\textbf{(E) }12\sqrt{3}$
\\
If you cannot determine the correct multiple-choice answer, take your best guess. Once you have your answer, please duplicate that letter five times in a single string. For example, if the answer is F, then write FFFFF.

\tcblower

Okay, so I need to solve this problem about triangle QAC with sides in arithmetic progression, the smallest side is 6, and there's a 120-degree angle. The answer choices are given, so I need to figure out the area. Let me start by understanding the problem step by step.

...

If the sides are 6, 10, 14, with the angle opposite 14 being 120 degrees. Then using area formula: $(1/2)*6*10*\sin(120^{\circ}) = 30*(\sqrt{3}/2) = 15*\sqrt{3}$. Yes, that's correct.

\textcolor{green}{
Alternatively, maybe we can compute the area using Heron's formula to verify. Let's try that. Heron's formula states that area = $\sqrt{s(s - a)(s - b)(s - c)}$, where $s$ is the semi-perimeter.}
Compute semi-perimeter $s = (6 + 10 + 14)/2 = 30/2 = 15$.
Then area = $\sqrt{15*(15 - 6)*(15 - 10)*(15 - 14)} = \sqrt{15*9*5*1} = \sqrt{15*45} = \sqrt{675} = \sqrt{225*3} = 15*\sqrt{3}$. So same answer. Therefore, $15\sqrt{3}$ is correct, which is option D.

Therefore, the answer is D, $15\sqrt{3}$.

**Final Answer**

\end{tcolorbox}

\noindent\begin{minipage}{\textwidth}
\captionof{figure}{Deepseek R1's response to a LiveBench Reasoning question. An example of exploring alternative solution paths.}
\label{fig:livebench-alternatives}
\end{minipage}

\section{Prompts}

\subsection{RLHF Instructions}
\begin{tcolorbox}[title=Prompt for Detecting Backtracking, breakable]
You are analyzing a solution to a math problem. Your task is to search for if the solution contains any instances of "backtracking" - where the solver thinks they might have made an error, goes back, and double checks/corrects/redoes a step. This will often be denoted by a comment like "actually," "wait," "hm", "let me reconsider". We expect that only 10\% of solutions will contain backtracking.

\# Math Question

\verb|```|

\textcolor{blue}{\texttt{{Question}}}

\verb|```|

\# Solution

\verb|```|

\textcolor{blue}{\texttt{{Solution}}}

\verb|```|

\# Response Format
Please structure your response as follows:

\verb|```|

\# Your Chain of Thought

\# Evidence

(If "Yes", quote specific parts of the solution that show backtracking. If "No", briefly explain why no backtracking was observed.)

\# Contains Backtracking

(Yes or No)

\verb|```|
\end{tcolorbox}

\begin{tcolorbox}[title=Prompt for Detecting Backtracking Part 2, breakable]
Please structure your output now as JSON, saying nothing else. Use the following format:

\verb|```json|

\verb|{|

\verb|    "contains_backtracking": bool,|

\verb|    "evidence": str|

\verb|}|

\verb|```|
\end{tcolorbox}

\begin{tcolorbox}[title=Standard RLHF Instructions, breakable]
You are given a text-based description of a task, submitted by a user. This task description may be in the form of an explicit math question. You will be given two text outputs, intended to help the user with their task. Your job is to evaluate these outputs to ensure that they are helpful, truthful, and correct. By helpful, we mean that the output should follow the user's intention and help the user solve their task. Some examples of helpful behavior:
• Writing in clear language.
• Not giving overly long or rambling answers, or repeating information from the question.
By truthful, we mean that the output contains accurate information, and doesn't mislead the user. Some examples of truthful
behavior:
• Not producing clearly false information.
• Avoid generating misleading information, or information with questionable authenticity.
By correct, we mean that the output should be mathematically correct and provide a correct answer to the question.
Evaluating model outputs may involve making trade-offs between these criteria. These trade-offs will depend on the task. Use
the following guidelines to help select between outputs when making these trade-offs:
For most tasks, being correct is more important than being helpful. So in most cases, rate an output that's more
correct higher than an output that's more helpful.
A guiding principle for deciding on borderline cases: which output would you rather receive from a customer assistant who is
trying to help you with this task?
Ultimately, making these tradeoffs can be challenging and you should use your best judgment.

\# User Query

\verb|```|

\textcolor{blue}{\texttt{{Query}}}

\verb|```|

\# Solution A

\verb|```|

\textcolor{blue}{\texttt{{Model Response A}}}

\verb|```|

\# Solution B

\verb|```|

\textcolor{blue}{\texttt{{Model Response B}}}

\verb|```|
\end{tcolorbox}

\begin{tcolorbox}[title=Short RLHF Instructions, breakable]
You are performing RLHF data labeling for a language model. Below is a math question and two solutions provided by different models: Solution A and Solution B. Both solutions are correct. Your task is to determine which solution is better.

\# Math Question

\verb|```|

\textcolor{blue}{\texttt{{Query}}}

\verb|```|

\# Solution A

\verb|```|

\textcolor{blue}{\texttt{{Model Response A}}}

\verb|```|

\# Solution B

\verb|```|

\textcolor{blue}{\texttt{{Model Response B}}}

\verb|```|

\# Evaluation

Which solution is better: Solution A or Solution B? Please choose the better solution and briefly explain your reasoning.

\# Response Format

Please structure your response as follows. PROVIDE A COMPLETE RESPONSE.

\verb|```|

\# Better Solution

(Choose either "Solution A" or "Solution B")

\# Reasoning

(Briefly explain why you chose the better solution)

\verb|```|
\end{tcolorbox}

\begin{tcolorbox}[title=RLHF Instructions Part 2, breakable]
Please structure your output now as JSON, saying nothing else. Use the following format:

\verb|```json|

\verb|{|

\verb|    "better_solution": str (either "Solution A" or "Solution B")|

\verb|}|

\verb|```|
\end{tcolorbox}

\subsection{LM-Based Scoring}
\label{app:scoring}

We use the same LM-based scoring as \cite{samplingbasedsearch}.
Given a tuple consisting of a question, ground-truth solution, and candidate response, we grade the correctness of the candidate response by querying a Gemini-v2.0-Flash model to compare the candidate and ground-truth solutions.
This involves providing the question, the correct ground-truth solution, and the candidate response, and asking the model to deliberate on the correctness of the candidate response.
These queries are all processed with temperature zero.
The prompts, which can be found in \cite{samplingbasedsearch}, ask the language model to
(1) identify the final answer of the given response, (2) identify the final answer of the reference (ground truth) response, and (3) determine whether the final answer of the given response satisfactorily matches that of the reference response, ignoring
any non-substantive formatting disagreements.
In line with convention, we instruct our scoring system to ignore the correctness of the logic used to reach the final answer and rather only judge the correctness of the final answer.
The model is asked to label all non-sensical and incomplete responses as being incorrect.

\subsection{Backtracking Detection}
\begin{tcolorbox}[title=Prompt 1, breakable]
I want to count how often my model (who is still rather dumb) backtracks unnecessarily. Your job is to help me with a specific task. Start by helping me by identifying every instance of where the model believes it needs to be cautious about something or might have made an error, and backtracks to either double check or attempt to correct something. Instances of backtracking are usually denoted by "Wait", "Let me check", etc.
1. Examples of backtracking include the model believing it might have made an error and double-checking or attempting to fix it, the model believing its approach is invalid or incorrect or a dead-end and attempting to change its strategy.
2. Expressions of concern that are *not* followed by the model attempting to check or correct something are not backtracks.
Provide a detailed chain of thought for each potential instance of backtracking before ruling on whether it is actually a backtrack.
\end{tcolorbox}

\begin{tcolorbox}[title=Prompt 2, breakable]
For each instance of backtracking that you have found, determine if the backtracking is superfluous. We say that a backtracking is superfluous if any of the following holds:
* The model backtracks because it believes it made an error, and you verify that the model did not actually make an explicit error.
* The model backtracks because it is concerned or cautious about having made an error, and you verify that the model did not actually make an explicit error.
* The model backtracks because it is concerned its current approach is wrong or a dead-end, and you verify that the model's approach is not actually wrong or a dead-end.
If the backtracking is indeed superfluous, then categorize it into one of the following categories:
* Category 1: 'Attempts to Fix Non-Error'
* Category 2: 'Concern About Non-Error'
* Category 3: 'Unnecessarily Changes Strategy'
* Category 4: 'Other'

Question:

\verb|```|

\textcolor{blue}{\texttt{{Question}}}

\verb|```|

Reference Answer:

\verb|```|

\textcolor{blue}{\texttt{{Answer}}}

\verb|```|

Model Response:

\verb|```|

\textcolor{blue}{\texttt{{Model Response}}}

\verb|```|
\end{tcolorbox}

\begin{tcolorbox}[title=Prompt 3, breakable]
Now, provide your final counts in a JSON format, saying nothing else. Structure your output as follows:

\verb|```|

\verb|{"num_not_superfluous": 0, "num_superfluous": 0, "num_category_1": 0,|
\verb|"num_category_2": 0, "num_category_3": 0, "num_category_4": 0}|

\verb|```|
\end{tcolorbox}

\end{document}